\def\eqref#1{equation~\ref{#1}}
\def\1{\bm{1}}
\DeclareMathAlphabet{\mathsfit}{\encodingdefault}{\sfdefault}{m}{sl}
\SetMathAlphabet{\mathsfit}{bold}{\encodingdefault}{\sfdefault}{bx}{n}
\newcommand{\E}{\mathbb{E}}
\newcommand{\rmd}{\mathrm{d}}
\newtheorem{thm}{Theorem}
\newtheorem{assumption}{Assumption}
\newtheorem{remark}[thm]{Remark}
\newtheorem{prop}[thm]{Proposition}
\newtheorem{lem}[thm]{Lemma}
\newtheorem{cor}[thm]{Corollary}
\theoremstyle{definition}
\newtheorem{defn}[thm]{Definition}
\newtheorem{intassumption}{Assumption}
\numberwithin{intassumption}{assumption}
\renewcommand{\theintassumption}{\theassumption.\alph{intassumption}}
\title{Wasserstein Convergence of Score-based Generative Models under Semiconvexity and Discontinuous Gradients}
\author{\name Stefano Bruno \email sbruno@ed.ac.uk \\
      \addr School of Mathematics \\
      University of Edinburgh, United Kingdom
      \\ Department of Industrial Engineering \\
      Ulsan National Institute of Science and Technology, Republic of Korea
      \AND
      \name Sotirios Sabanis \email s.sabanis@ed.ac.uk \\
      \addr School of Mathematics
      \\ University of Edinburgh, United Kingdom
      \\ National Technical University of Athens, Greece
      \\ Archimedes, Athena Research Center, Greece}
\begin{document}

\maketitle

\begin{abstract}
\noindent Score-based Generative Models (SGMs) approximate a data distribution by perturbing it with Gaussian noise and subsequently denoising it via a learned reverse diffusion process. These models excel at modeling complex data distributions and generating diverse samples, achieving state-of-the-art performance across domains such as computer vision, audio generation, reinforcement learning, and computational biology. Despite their empirical success, existing Wasserstein-2 convergence analysis typically assume strong regularity conditions--such as smoothness or strict log-concavity of the data distribution--that are rarely satisfied in practice.
	In this work, we establish the first non-asymptotic Wasserstein-2 convergence guarantees for SGMs targeting semiconvex distributions with potentially discontinuous gradients. Our upper bounds are explicit and sharp in key parameters, achieving optimal dependence of $O(\sqrt{d})$ on the data dimension $d$ and convergence rate of order one. The framework accommodates a wide class of practically relevant distributions, including symmetric modified half-normal distributions, Gaussian mixtures, double-well potentials, and elastic net potentials. By leveraging semiconvexity without requiring smoothness assumptions on the potential such as differentiability, our results substantially broaden the theoretical foundations of SGMs, bridging the gap between empirical success and rigorous guarantees in non-smooth, complex data regimes.
\end{abstract}

\section{Introduction}    \label{Introduction}

Score-based Generative Models (SGMs), also known as diffusion-based generative models \citep{NEURIPS2019_3001ef25,song2021scorebased,sohl2015deep,ho2020denoising}, have rapidly emerged over the past few years as a popular approach in modern generative modelling due to their remarkable capabilities in generating complex data, surpassing previous state-of-the-art models, such as Generative Adversarial Networks (GANs) \citep{goodfellow2014generative} and Variational AutoEncoders (VAEs) \citep{kingma2014auto}.  These models are now
widely adopted in computer vision and audio generation tasks \citep{kong2020diffwave,chen2020wavegrad,mittal2021symbolicdiffusion,Avrahami2021BlendedDF,Kim2021DiffusionCLIPTD,Bansal2023UniversalGF,Saharia2022PhotorealisticTD,Po2023StateOT,Zhang2023ASO}, text
generation \citep{Li2022DiffusionLMIC,Yu2022LatentDE,Lovelace2022LatentDF}, sequential data modeling \citep{LopezAlcaraz2022DiffusionbasedTS,Tashiro2021CSDICS,Tevet2022HumanMD}, reinforcement learning and control \citep{Pearce2023ImitatingHB,Chi2023DiffusionPV,HansenEstruch2023IDQLIQ,Reuss2023GoalConditionedIL,Zhu2023DiffusionMF,Ding2023ConsistencyMA}, as
well as life-science \citep{Chung2021ScorebasedDM,Jing2022TorsionalDF,Watson2023DeND,Song2021SolvingIP,Weiss2023GuidedDF}.  We refer the reader to the survey papers  \citet{yang2023diffusion, chen2024overview} for a more comprehensive exposition of their applications. 


\noindent The primary goal of SGMs is to generate synthetic data that closely match a target data distribution $\pi_{\mathsf{D}}$, given a sample set. In particular, these models generate approximate data samples from high-dimensional data distributions by combining two diffusion processes, a forward and a backward process in time. The forward process  is used to iteratively and smoothly transform samples from the unknown data distribution into (Gaussian) noise, while the associated backward process reverses the noising procedure to generate new samples from the starting unknown data distribution. A key role in these models is played by the score function, i.e. the gradient of the log-density of the solution of the forward process, which appears in the drift of the stochastic differential equation (SDE) associated with the backward process. Since this quantity depends on the unknown data distribution, an estimator of the score must be constructed during the noising step using score-matching techniques \citep{hyvarinen2005estimation,vincent2011connection}. 

\noindent The widespread applicability and success of SGMs have been accompanied by a growing interest in the theoretical understandings of these models, particularly in the convergence analysis under different metrics such as Total Variation (TV) distance, Kullback Leibler (KL) divergence, Wasserstein distance,  e.g.,  \citet{block2020generative, de2021diffusion, debortoli2022convergence, lee2022convergence,yang2022convergence, kwon2022score, liu2022let, oko2023diffusion, lee2023convergence,chen2023improved,chen2023sampling, li2023towards,  pedrotti2023improved, conforti2023score, benton2023linear, strasman2025an, bruno2025on, tang2024contractive, mimikos-stamatopoulos2024scorebased, wang2024wasserstein, gentiloni2025beyond,yu2025advancing}. In this work, we provide a non-asymptotic convergence analysis in Wasserstein distance of order two, as this metric is often considered more practical and informative for estimation tasks (see e.g.,  \eqref{eq:obj_explicit_score_matching_general_no_weighting}), and is closely connected to the popular Fr{\'e}chet Inception Distance (FID) used to assess the quality of images in generative modeling (see, e.g., Section \ref{sec:related_work}).  A significant limitation of prior analysis in Wasserstein-2, e.g.,  \citet{strasman2025an,gao2023wasserstein,bruno2025on,tang2024contractive,wang2024wasserstein,yu2025advancing}, is their reliance on strong regularity conditions--such as smoothness or strict log-concavity --  of the data distribution and its potential. These assumptions facilitate mathematical tractability but limit the applicability of theoretical results to more general settings, especially when the data distribution is only semiconvex and the potential’s gradient may be discontinuous. The only exception outside the strict log-concavity regime is the recent contribution in \citet{gentiloni2025beyond}, where the authors assumes that the data distribution is weakly convex. However, their analysis still requires the potential to be twice continuously differentiable (see, e.g., \citet[Proofs of Propositions B.1 and B.2]{gentiloni2025beyond}), and the stepsize of their generative algorithm must be bounded by a quantity inversely proportional to the one-sided Lipschitz constant of the potential (see \citet[equation (30)]{gentiloni2025beyond}). Still, such conditions on $\pi_{\mathsf{D}}$ in existing  Wasserstein-2 convergence analysis do not fully reflect the complexity of real-world data, which often exhibit non-smooth or non-log-concave distributions. Therefore, the aim of this work is to address the following fundamental question:
\begin{center} 
	\textit{Can Score-based Generative Models be guaranteed to converge in Wasserstein-2 distance when the data distribution is only semiconvex and the potential admits discontinuous gradients?}  \end{center}
\noindent We provide a positive answer to this question by combining recent findings in non-smooth, non-log-concave sampling, with standard stochastic analysis tools, thereby presenting the first contributions in the Score-based generative modeling literature for non-smooth potentials. We establish explicit, non-asymptotic  Wasserstein-2 convergence bounds for SGMs under semiconvexity assumptions on the data distribution,  accommodating potentials with discontinuous gradients.  This framework covers a variety of practically relevant distributions arising in Bayesian statistical methods, including symmetric modified half-normal distributions, Gaussian mixtures, double-well potentials, and elastic net potentials, all of which satisfy our relaxed assumptions. 

\noindent  In addition, our estimates are explicit and exhibit the best known optimal dependencies in terms of data dimension, i.e., $O(\sqrt{d})$ in Theorem \ref{main_theorem_general},  and rate of convergence, i.e., $O(\gamma)$ in Theorem \ref{main_theorem_general_better_rate_of_convergence}. In contrast to prior works under the same metric \citet{gentiloni2025beyond, gao2023wasserstein, strasman2025an, tang2024contractive}, our estimates in Theorem \ref{main_theorem_general} and Theorem  \ref{main_theorem_general_better_rate_of_convergence} are derived without imposing any restrictions on the stepsize of the generative algorithm\footnote{The results in \citet{gentiloni2025beyond, gao2023wasserstein, strasman2025an, tang2024contractive} require the stepsize to be controlled in terms of the Lipschitz constant or the strong convexity constant of the target data distribution; see, e.g.,  Table \ref{Table_comparison_stepsize} below. The only exceptions are the results in \citet[Remark 12 and Theorem 10]{bruno2025on}, which, however, requires stronger assumptions on the data distribution than Assumption \ref{assumption_score_strong_monotonicity_new_semi_convexity} below.}, making them more suitable for practical implementation. By circumventing the need for strict regularity conditions on the score function and allowing discontinuities in the gradients of the potentials, our work significantly expands the theoretical foundation of SGMs. This advancement not only bridges the gap between empirical success and theoretical guarantees but also opens new avenues for the application of diffusion models to data distributions with non-smooth potentials. 

\noindent One source of error in the construction of the generative algorithm arises from replacing the initial condition of the backward process with the invariant measure of the forward process. To ensure this error remains small, the drift terms of both SDEs must satisfy, for instance, a monotonicity property with a time-dependent bound that meets an appropriate integrability condition (see, e.g., \eqref{contraction_score_assumption_constant_semiconvexity} and \eqref{expression_constant_A_t} below). To address this, we identify a time horizon for the generative algorithm that ensures the paths of the two backward processes become contractive. Notably, the integrability condition on the monotonicity bound depends only on the known constants in Assumption \ref{assumption_score_strong_monotonicity_new_semi_convexity}, making it significantly easier to verify in practice compared to the analogous condition in \citet[Appendix C]{gentiloni2025beyond}, which relies on weak convexity constants that are often difficult to estimate. 


\noindent  In conclusion, we present the first explicit, dimension- and parameter-dependent $W_2$-convergence guarantees for Score-based Generative models operating on  data distributions having potentials with discontinuous gradients. Our results mark an important step forward in the rigorous analysis of SGMs, providing both theoretical insights and practical tools for advancing generative modeling in challenging, non-smooth regimes.

\textit{Notation.} Let $(\Omega, \mathcal{F}, \mathbb{P})$  be a fixed
probability space. We denote by $\mathbb{E}[X]$ the expectation of a random variable $X$. For $ 1 \le p < \infty$, $L^p$ is used to denote the usual space of $p$-integrable real-valued random
variables. The $L^p$-integrability of a random variable $X$ is defined as   $\mathbb{E}[|X|^p] < \infty$. Fix an integer $d \ge 1$. For an $\mathbb{R}^{d }$-valued random variable $X$, its law on $\mathcal{B}(\mathbb{R}^{d} )$, i.e. the
Borel sigma-algebra of $\mathbb{R}^{d} $  is denoted by $\mathcal{L}(X)$.   Let $T>0$ denote a time horizon. For a positive real number $b$, we denote its integer part by $\lfloor b \rfloor$. The Euclidean scalar product is denoted by $\langle \cdot, \cdot \rangle $, with $| \cdot |$ standing for the corresponding norm (where the dimension of the space may vary depending on the context). Let $f: \mathbb{R}^{d } \rightarrow \mathbb{R}$ be a continuously differentiable function. The gradient of $f$ is denote by $\nabla f$. For any integer $q \ge 1$, let $\mathcal{P}(\mathbb{R}^q)$ be the set of probability measures on $\mathcal{B}(\mathbb{R}^q)$. For $\mu$, $\nu \in \mathcal{P}(\mathbb{R}^d)$, let $\mathcal{C}(\mu, \nu)$ denote the set of probability measures $\zeta$ on $\mathcal{B}(\mathbb{R}^{2d })$ such that its respective marginals are $\mu$ and $\nu$. For any $\mu$ and $\nu \in \mathcal{P}(\mathbb{R}^{d })$, the Wasserstein distance of order 2 is defined as
\begin{equation*}
	W_2(\mu, \nu)= \left(  \inf_{\zeta \in \mathcal{C}(\mu, \nu)} \int_{\mathbb{R}^d}  \int_{\mathbb{R}^d}  |x - y|^2 \ \text{d} \zeta(x,y) \right)^{\frac{1}{2}}.
\end{equation*}
Table \ref{Table_notation} (Appendix \ref{Table_of_constants_appendix})  lists the main symbols used throughout this work along with references to where they are defined.

	\section{Technical Background for OU-based SGMs} \label{Technical_Background_section}


\noindent	In this section, we briefly summarize the construction of score-based generative models (SGMs) via diffusion processes, as introduced by \citet{song2021scorebased}. The core idea behind SGMs is to employ an ergodic (forward) diffusion process that gradually transforms the unknown data distribution $\pi_{\mathsf{D}} \in \mathcal{P}(\mathbb{R}^{d})$ into a known prior distribution. A backward (in time) process is then learned to transform the prior  back to the target distribution $\pi_{\mathsf{D}}$ by estimating the score function of the forward process. In our analysis, we consider the forward process $(X_t)_{t \in [0,T]}$  to be an Ornstein-Uhlenbeck (OU) process, consistent with the choice in the original paper \citet{song2021scorebased}
\begin{equation} \label{OU_process_introduction}
	\text{d} X_t = -  X_t \ \text{d}t + \sqrt{2  } \ \text{d}B_t, \quad X_0 \sim \pi_{\mathsf{D}},
\end{equation}
where $(B_t)_{t \in [0,T]}$ is an $d$-dimensional Brownian motion and we assume that  $\mathbb{E}[|X_0|^2] < \infty$. 

For target data distributions $\pi_{\mathsf{D}}$ that are absolutely continuous with respect to the Lebesgue measure, and whose densities are continuous and integrable,
the backward process $(Y_t)_{t \in [0,T]} = (X_{T-t})_{t \in [0,T]}$ is well defined \footnote{The regularity of the Ornstein--Uhlenbeck semigroup for all $t \in (0,T]$ (see Appendix~\ref{regularity_appendix_section}, and \citet[Proof of Proposition 3.1]{conforti2023score}) ensures that the necessary and sufficient conditions for the reversibility of the diffusion process are satisfied; see, e.g., \citet[Theorem 2.2]{millet1989integration}  or \cite[Theorem 2.1]{haussmann1986time}. These conditions on $\pi_{\mathsf{D}}$  are included in Assumption~\ref{assumption_score_strong_monotonicity_new_semi_convexity}.} \citep{millet1989integration, haussmann1986time},  and is given by
\begin{equation} \label{eq:backwardproc_real_initial_condition_introduction}
	\text{d} Y_t  =  (Y_t +2  \nabla \log p_{T-t}(Y_t)) \ \text{d} t +\sqrt{2  } \ \text{d} \bar{B}_t, \quad Y_0 \sim \mathcal{L}(X_T),
\end{equation}
where $\{p_t \}_{t \in [0,T]}$ is the family of densities of $\{ \mathcal{L}(X_t)  \}_{t \in (0,T]}$ with respect to the Lebesgue measure, $\bar{B}_t$ is an another Brownian motion independent of $B_t$ in \ref{OU_process_introduction} defined on $(\Omega, \mathcal{F}, \mathbb{P})$.
In practice, however, the initial distribution is taken to be the invariant measure of the forward process, which corresponds to the standard 
Gaussian distribution. As a result, the backward process in \ref{eq:backwardproc_real_initial_condition_introduction} becomes
\begin{equation} \label{Y_hat_auxiliary}
	\text{d} \widetilde{Y}_t =  (\widetilde{Y}_t + 2 \  \nabla \log p_{T-t} ( \widetilde{Y}_t ) ) \ \text{d} t + \sqrt{2  } \ \text{d} \bar{B}_t, \quad \widetilde{Y}_0 \sim \pi_{\infty} = \mathcal{N}(0,I_{d}).
\end{equation}
Since the target distribution $\pi_{\mathsf{D}}$ is unknown, the score function $ \nabla \log p_t$  in \ref{eq:backwardproc_real_initial_condition_introduction}  cannot be computed exactly. To overcome this limitation, an estimator $s(\cdot, \theta^{*}, \cdot)$  is \textit{learned} based on a family of functions $ s: [0,T] \times \mathbb{R}^M \times \mathbb{R}^d \rightarrow  \mathbb{R}^d $ parametrized in $\theta$, aiming at approximating the score of the ergodic forward process \ref{eq:obj_explicit_score_matching_general_no_weighting}  over a fixed time window $[0,T]$. In practice,  $s$ are neural networks and in particular cases, e.g., the motivating example in \citet[Section 3.1]{bruno2025on}, the functions $s$ can be carefully designed.
The optimal value  $\theta^*$ of the parameter $\theta$ is determined  by  optimizing the following score-matching objective
\begin{equation}\label{eq:obj_explicit_score_matching_general_no_weighting}
	\begin{split}
		\mathbb{R}^d \ni \theta \mapsto  \E \left[  \int_{0}^T  | \nabla \log p_{t}(X_t)  -  s(t, \theta , X_t) |^2 \  \rmd t \right].
	\end{split}
\end{equation}	
An explicit expression of the stochastic gradient of \ref{eq:obj_explicit_score_matching_general_no_weighting} derived via denoising score matching \citep{vincent2011connection} is provided in \citet[equation (8), Section 2]{bruno2025on}.	
\noindent  Following \citet[Section 2]{bruno2025on}, we define an auxiliary process $ (Y_t^{\text{aux}})_{t \in [0,T]}$ that incorporates the approximating function $s$, which depends on the (random) estimator  of $\theta^*$ denoted by $\hat{\theta} $.  For $t \in [0,T]$, this process is given by
\begin{equation}  \label{Y_auxiliary_theta_hat}
	\text{d} Y_t^{\text{aux}} =  (Y_t^{\text{aux}} + 2 \ s(T-t, \hat{\theta}, Y_t^{\text{aux}}) )  \ \text{d} t + \sqrt{2 } \ \text{d} \bar{B}_t, \quad Y_0^{\text{aux}} \sim \pi_{\infty} = \mathcal{N}(0,I_{d}).
\end{equation}
The auxiliary process \ref{Y_auxiliary_theta_hat} serves as a bridge between the
 backward process \ref{Y_hat_auxiliary} and the numerical scheme \ref{continuous_time_EM_version}, and it facilitates the analysis of the convergence of the diffusion model (see the upper bounds involving $Y_t^{\text{aux}} $ in the proof of Theorem \ref{main_theorem_general} in Appendix \ref{proof_of_main_Theorem_appendix_section}  for further details). 
We now introduce the numerical scheme. Let the step size $\gamma_{j}=\gamma \in (0,1)$ for each $j=0,\dots, J$, where $J \in \mathbb{N}$.
The discrete process $(Y_{j}^{\text{EM}})_{j \in \{0, \dots, J +1 \} }$ of the Euler--Maruyama approximation of \ref{Y_auxiliary_theta_hat} is given, for any $j \in \{0, \dots, J \}$, as follows
\begin{equation} \label{eq:backwardprocemdisc_old}
	Y_{j+1}^{\text{EM}} =	Y_{j}^{\text{EM}} +  \gamma (Y_{j}^{\text{EM}} + 2  \ s(T-t_{j}, \hat{\theta} ,  Y_{j}^{\text{EM}})) + \sqrt{2  \gamma   } \ \bar{Z}_{j+1}, \quad Y_{0}^{\text{EM}} \sim  \pi_{\infty} = \mathcal{N}(0,I_{d}),
\end{equation}
where $\{\bar{Z}_{j} \}_{j \in \{0, \dots, J +1 \}}$ is a sequence of independent $d$-dimensional Gaussian random variables with zero mean and identity covariance matrix. The continuous-time interpolation of \ref{eq:backwardprocemdisc_old},  for $t \in [0,T]$, is given by
\begin{equation} \label{continuous_time_EM_version}
	\rmd \widehat{Y}_t^{\text{EM}} =  (\widehat{Y}_{\lfloor t/\gamma \rfloor \gamma }^{\text{EM}} + 2 \ s(T-\lfloor t/\gamma \rfloor \gamma, \hat{\theta} , \widehat{Y}_{\lfloor t/\gamma \rfloor \gamma}^{\text{EM}})) \ \text{d}t + \sqrt{2 } \ \text{d}\bar{B}_t, \qquad \widehat{Y}_0^{\text{EM}} \sim \pi_{\infty} = \mathcal{N}(0,I_{d}),
\end{equation}
where $\mathcal{L}(\widehat{Y}_{j}^{\text{EM}}) = \mathcal{L}(Y^{\text{EM}}_{j})$ at grid points for each $j \in\{0, \dots, J +1\}$.

	\section{Wasserstein Convergence Analysis for SGMs} \label{Theoretical_convergence_section}
\noindent In this section, we provide the full non-asymptotic estimates in Wasserstein distance of order two between the target data distribution $\pi_{\mathsf{D}}$  and the generative distribution of the diffusion model under the assumptions stated below. As discussed in \citet[Section 2 and Appendix A]{bruno2025on}, it may be necessary to restrict  $t \in [\epsilon,T]$ for $\epsilon \in (0,1)$ in \ref{eq:obj_explicit_score_matching_general_no_weighting} to account for numerical instabilities that can arise during training and sampling near $t=0$ as also observed in practice in \citet[Appendix C]{song2021scorebased}, and for the possibility that the integral of the score function in \ref{eq:obj_explicit_score_matching_general_no_weighting} may diverge when $t= 0$. Therefore, we truncate the integration in the backward diffusion at $ T- \epsilon$ and consider the process $(Y_t)_{t \in [0,T - \epsilon]}$.




\subsection{Assumptions} \label{assumptions_section}
\noindent We begin by stating the main assumptions of our setting. The optimization problem in \ref{eq:obj_explicit_score_matching_general_no_weighting} can be solved using algorithms such as  stochastic gradient descent \citep{jentzen2021strong}, ADAM \citep{KingBa15}, Stochastic Gradient Langevin Dynamics \citep[Section 3.1]{bruno2025on}, and TheoPouLa \citep{SabanisLimTheoPoula}, provided they satisfy the following assumption.

\begin{assumption} \label{general_assumption_algorithm}
	Let $\theta^*$ be a minimiser\footnote{The score-matching optimization problem \ref{eq:obj_explicit_score_matching_general_no_weighting} is not necessarily (strongly) convex.} of \ref{eq:obj_explicit_score_matching_general_no_weighting} and let  $\hat{\theta}$ be the (random) estimator of $\theta^*$ obtained through some approximation procedure such that $	\mathbb{E} [ | \hat{\theta} |^2] < \infty$. There exists $ \widetilde{\varepsilon}_{\text{AL}} >0$  such that
	\begin{equation*}
		\mathbb{E} [ | \hat{\theta} - \theta^{*}|^2 ] <  \widetilde{\varepsilon}_{\text{AL}}.
	\end{equation*}
\end{assumption}
\begin{remark} \label{Control_algorithm}
	As a consequence of Assumption \ref{general_assumption_algorithm}, one obtains $	\mathbb{E} [ | \hat{\theta} |^2 ] < 2  \widetilde{\varepsilon}_{\text{AL}} + 2 | \theta^{*}|^2$.
\end{remark}

\noindent In this work, we consider the potentials to be semiconvex functions-- a broad generalization of convex functions that includes non-convex functions whose curvature is bounded from below. This class allows for discontinuities in the gradient while retaining key analytical properties of convex functions, such as the existence of well-defined subgradients.
\begin{defn}
    A function $U$ is semiconvex if there exists $K \ge 0$ such that $U+ \frac{K}{2} | \cdot |^2$ is convex.
\end{defn}
Semiconvexity  has received significant attention\footnote{Some of these references refer to semiconvex functions as weakly convex. We avoid this terminology to prevent confusion with the notion of weak convexity introduced in Definition~\ref{weak_convexity_definition_gentiloni} below.} in the machine  learning community \citep{davis2018subgradient, sun2019least,  richards2021learning, liu2021first, rafique2022weakly}, optimization \citep{li2019incremental, ma2020quadratically,li2021augmented,hu2025distributed}, optimal control \citep{cannarsa2004semiconcave},  and the study of fully nonlinear partial differential equations \citep{braga2019optimal,payne2023primer}.  We refer the reader to \citet{duda2009semiconvex, cattiaux2014semi} for comprehensive overviews of the mathematical challenges associated with this class.
Importantly, semiconvex functions may admit  discontinuous gradients which are characterized using the   Fr{\'e}chet subdifferential \citep{bazaraa1974cones, alberti1992singularities}.
\begin{defn} \label{definition_subdifferential}
    For a function $U:  \mathbb{R}^d \rightarrow \mathbb{R}$, we define the subdifferential $\partial U(x)$ of $U$ at $x \in \mathbb{R}^d $ as
    \begin{equation} \label{subdifferential_definition_potential}
	\partial U (x) = \left \{  \tilde{p} \in \mathbb{R}^d : \ \liminf_{z \rightarrow x} \frac{U(z)-U(x) - \langle \tilde{p}, z-x \rangle}{|z-x|} \ge 0 \right \}.
\end{equation}
\end{defn}
The set \ref{subdifferential_definition_potential} is closed and convex, and may be empty in general.  We say that $U$ is Fr{\'e}chet 
subdifferentiable at $x$ if  $\partial U(x) \neq \emptyset $. Any element $h(x) \in \partial U(x)$ is called a Fr{\'e}chet subgradient of $U$ at $x \in \mathbb{R}^d$. 
When $U$ is differentiable at $x$, the subdifferential reduces to $\partial U(x) = \{ \nabla U(x) \}$. 
Crucially, semiconvex functions are Fr{\'e}chet 
 subdifferentiable at every points in $\mathbb{R}^d $, i.e. $\partial U(x) \neq \emptyset$ for all $x $ \citep{alberti1992singularities,cannarsa2004semiconcave}. In this case, $h(x) + K x$ corresponds to the classical convex  subgradient \cite[Proposition 4.6]{vial1983strong}.   Moreover, every element of the subdifferential of a semiconvex function satisfies a one-sided Lipschitz condition, ensuring  the existence of $h(x) \in \partial U(x)$. The following lemma-- adapted from \citep[Proposition 2.1]{alberti1992singularities} and presented in \citep[Lemma 1]{johnston2025performance}--and its subsequent corollary formalize this property. 
\begin{lem}
\label{lemma_one_sided_Lipschitz_subdifferential} 
\citep[Modification of Proposition 2.1]{alberti1992singularities}
Let $U$ be a semiconvex function. Then, $U$ is locally Lipschitz continuous, the subdifferential set $\partial U$ is non-empty, compact, and $\tilde{p} \in \partial U(x)$, if and only if
\begin{equation*}
    U(z) - U(x) - \langle \tilde{p} , z - x \rangle \ge - \frac{K}{2} |z-x|^2,
\end{equation*}
for all $x,z \in \mathbb{R}^d$.
\end{lem}
\begin{cor}\citep[Corollary 1]{johnston2025performance}
Let $x, z \in \mathbb{R}^d$, $\tilde{p} \in \partial U(x)$, and $\tilde{q} \in \partial U(z)$. Then,
\begin{equation*}
    \langle \tilde{p} - \tilde{q} , x-z \rangle \ge -K | x-z |^2.
\end{equation*}
\end{cor}

We state the assumption on the target data distribution $\pi_{\mathsf{D}}$ below. Recall that $h(x) \in \partial U(x)$ is the Fr{\'e}chet subgradient of $U$ at $x \in \mathbb{R}^d$.  
\begin{assumption} \label{assumption_score_strong_monotonicity_new_semi_convexity}
	The data distribution $\pi_{\mathsf{D}}$  has a finite second moment and it is absolutely continuous with respect to the Lebesgue measure with  $\pi_{\mathsf{D}} (\rmd x) = \exp(-U(x)) \ \rmd x$ for some $U: \mathbb{R}^d \rightarrow \mathbb{R}$. Moreover,
	\begin{enumerate}[label=(\roman*)]
		\item The potential $U$ is continuous and its gradient exists almost everywhere.
		\item The potential $U$ is $K$-semiconvex (on a ball). That is, there exists $K, R \ge 0$, such that for all $ x, \bar{x} \in \mathbb{R}^d$,
		\begin{equation*}
			\langle  h(x) - h(\bar{x}), x - \bar{x}  \rangle  \ge -K  |x - \bar{x}|^2, \qquad \text{when} \quad |x - \bar{x}| < R,
		\end{equation*}
		\item The potential $U$ is $\mu$-strongly convex at infinity\footnote{Intuitively, outside a sufficiently large region, 
$U$ bends upwards at least as much as a quadratic function.}. That is, there exists $
		\mu>0$ such that for all $ x, \bar{x} \in \mathbb{R}^d$,
		\begin{equation} \label{contractivity_at_infinity_new}
			\langle h(x) -  h(\bar{x}), x - \bar{x}  \rangle  \ge \mu |x - \bar{x}|^2, \qquad \text{when} \quad |x - \bar{x}| \ge R,
		\end{equation}
        where $R$ is the same as in point (ii).
	\end{enumerate}
	
\end{assumption}

\begin{remark} \label{contraction_remark_contractivity_at_infinity}
	As a consequence of Proposition \ref{density_OU_smoothness_proposition}, due to \citet[Proposition 3.1]{conforti2023score}, and Assumption \ref{assumption_score_strong_monotonicity_new_semi_convexity}-(i),  we have that, for any $t \in (0,T)$, the map $x \mapsto \nabla p_t(x)$ is  continuously differentiable, and for any $x \in \mathbb{R}^d$, the map $t \mapsto p_t(x)$ is continuously differentiable on $(0,T]$.
	Moreover, Assumption \ref{assumption_score_strong_monotonicity_new_semi_convexity} implies that the processes in \ref{eq:backwardproc_real_initial_condition_introduction} and \ref{Y_hat_auxiliary} have a unique strong solution.
\end{remark}

\noindent Next, we consider the following assumption on the approximating function $s$, which is also adopted in \citet[Assumption 3.a]{bruno2025on}.

\stepcounter{assumption}
\begin{intassumption} \label{Assumption_2_without_derivative}
	The function $s : [0,T] \times \mathbb{R}^M \times \mathbb{R}^d \rightarrow \mathbb{R}^d $ is continuously differentiable in $x \in \mathbb{R}^d$. Let $D_1 : \mathbb{R}^M \times \mathbb{R}^M   \rightarrow \mathbb{R}_{+}$, $D_2 :  [0,T]  \times  [0,T]  \rightarrow \mathbb{R}_{+}$ and $D_3 : [0,T] \times [0,T] \rightarrow \mathbb{R}_{+}$ be such that $\int_{\epsilon}^T  \int_{\epsilon}^T D_2(t,\bar{t}) \ \text{d}t \ \text{d}\bar{t} < \infty $ and $\int_{\epsilon}^T  \int_{\epsilon}^T  D_3(t, \bar{t}) \ \text{d}t \ \text{d}\bar{t} < \infty $. For $\alpha \in \left[\frac{1}{2},1 \right]$ and for all $t, \bar{t} \in [0,T]$, $x, \bar{x} \in \mathbb{R}^{d}$, and $\theta, \bar{\theta} \in \mathbb{R}^M$, we have that
	\begin{equation*}
		\begin{split}
			| s(t, \theta,x) - s(\bar{t}, \bar{\theta},\bar{x}) | & \le  D_1(\theta, \bar{\theta})  |t - \bar{t}|^{\alpha} + D_2(t, \bar{t})  | \theta - \bar{\theta} |
			+ D_3(t,\bar{t})  |x -\bar{x} |,
		\end{split}		
	\end{equation*}
	where $D_1$, $D_2$ and $D_3$ have the following growth in each variable: i.e., there exist $\mathsf{K}_1$, $\mathsf{K}_2$, and $\mathsf{K}_3>0$ such that
	for each $t,\bar{t} \in [0,T]$ and $ \theta, \bar{\theta} \in \mathbb{R}^M$,
	\begin{equation*}
		\begin{split}
			|D_1(\theta, \bar{\theta}) |  & \le \mathsf{K}_1 (1+ | \theta | + | \bar{\theta} | ),
			\qquad 	|D_2(t, \bar{t}) | 	 \le \mathsf{K}_2 (1+ | t |^{\alpha} + | \bar{t}|^{\alpha}),
			\\ 	|D_3(t,\bar{t}) | & \le \mathsf{K}_3 (1+ | t |^{\alpha} + | \bar{t}|^{\alpha}).
		\end{split}
	\end{equation*}
\end{intassumption}
\begin{remark}
Assumption~\ref{Assumption_2_without_derivative} requires that the approximating function $s$ is Lipschitz continuous in both the input variable $x$ and the parameter $\theta$. In time $t$, it allows $s$ to be either H{\"o}lder continuous (for $\alpha \in [\frac{1}{2},1)$) or Lipschitz continuous (for $\alpha = 1$). This relaxed continuity in $t$ for the drift term of \ref{continuous_time_EM_version} is standard for the Euler--Maruyama schemes for SDEs. Crucially, we show in Theorems~\ref{main_theorem_general} and~\ref{main_theorem_general_better_rate_of_convergence} that this weaker condition in $t$ still guarantees convergence of the generative algorithm to $\pi_{\mathsf{D}}$.
	As noted by \citet[Remark 6]{bruno2025on} in the context of neural network-based approximations, Assumption~\ref{Assumption_2_without_derivative}, when $\alpha = 1$, covers the case where $s$ is implemented as a neural network with a hyperbolic tangent or sigmoid activation function at the final layer.
Moreover, Assumption \ref{Assumption_2_without_derivative} implies that the process in \ref{Y_auxiliary_theta_hat}, \ref{eq:backwardprocemdisc_old}, and \ref{continuous_time_EM_version} have a unique strong solution.
\end{remark}
\begin{remark} \label{remark_growth_estimate_neural_network}
	Let $ \mathsf{K}_{\text{Total}} := \mathsf{K}_1+\mathsf{K}_2+\mathsf{K}_3+  |  s(0, 0,0) |>0$. Using Assumption \ref{Assumption_2_without_derivative}, one obtains
	\begin{equation*}
		\begin{split}
			|s(t, \theta,x)| & \le   \mathsf{K}_{\text{Total}} (1+ | t |^{\alpha} ) (1+ | \theta | + |x| ).
		\end{split}
	\end{equation*}
\end{remark}
\noindent The proof of Remark \ref{remark_growth_estimate_neural_network} can be found, e.g., in \citet[Appendix D.3]{bruno2025on}.
By imposing an additional condition on the gradient of $s$ in Assumption \ref{Assumption_2_without_derivative}—as done in \citet[Assumption 3.b]{bruno2025on}—, we obtain the optimal convergence rate established in Theorem \ref{main_theorem_general_better_rate_of_convergence} below.	
\begin{intassumption} \label{Assumption_2}
	\renewcommand\theintassumption{} 
	Let $s$ be as in Assumption \ref{Assumption_2_without_derivative} and there exists $\mathsf{K}_4>0$ such that, for all $x, \bar{x} \in \mathbb{R}^d$ and  for any $k=1, \dots d$,
	\begin{equation*} 
		| \nabla_x s^{(k)}(t,\theta,x) - \nabla_{\bar{x}} s^{(k)} (t,\theta,\bar{x})  | \le  \mathsf{K}_4 (1+2 |t|^{\alpha}) | x -\bar{x}|.
	\end{equation*}
\end{intassumption}

\noindent For the following assumption on the score approximation, we let  $\hat{\theta}$ be as in Assumption \ref{general_assumption_algorithm} and we let $ (Y_t^{\text{aux}})_{t \in [0,T]}$ be the auxiliary process defined in \ref{Y_auxiliary_theta_hat}. 
\begin{assumption} \label{assumption_equivalence_global_minimiser_epsilon}
	There exists $\varepsilon_{\text{SN}} >0$ such that
	\begin{equation}\label{score_error_auxiliary}
		\mathbb{E}  \int_0^{T-\epsilon}   | \nabla \log p_{T - r}(Y_r^{\text{aux}}) - s(T-r, \hat{\theta} , Y_r^{\text{aux}})  |^2  \  \text{d} r <  \varepsilon_{\text{SN}} .
	\end{equation}
\end{assumption}
\begin{remark}
	Assumption \ref{assumption_equivalence_global_minimiser_epsilon} is now a standard assumption considered in the literature, see, e.g., \citet{ gao2023wasserstein,bruno2025on,strasman2025an, gentiloni2025beyond}, and its theoretical and practical soundedness is discussed, e.g., in \citet[Remark 7, 8, and 9]{bruno2025on}.
\end{remark}

	\subsection{Assumption \ref{assumption_score_strong_monotonicity_new_semi_convexity} and Weak Convexity of the Data Distribution} \label{assumptions_weak_convexity_section}

\noindent We start by introducing the definition of weak convexity, a concept that has been widely used in conjunction with coupling techniques to analyze the long-time behavior of gradient flows SDEs \citep{10.1214/22-AAP1927,conforti2024weak, conforti2023projected, conforti2023quantitative}\footnote{Recently, this notion has been used in the context of score-based generative models in \citet[Definition 3.1]{gentiloni2025beyond}.}, and we extend its application here to subgradients of $U$. 

\begin{defn} \label{weak_convexity_definition_gentiloni}
	The potential $U: \mathbb{R}^d \rightarrow \mathbb{R}$ is weakly convex if its weak convexity profile $\kappa_{U}: [0,\infty) \rightarrow \mathbb{R}$  defined as 
	\begin{equation} \label{definition_weak_convexity_profile_new}
		\kappa_{U} (r) = \inf_{x, \bar{x} \in \mathbb{R}^{d}:  \ |x-\bar{x} |=r} \left \{  \frac{\langle h(x) - h(\bar{x} ), x-\bar{x}   \rangle }{|x-\bar{x} |^2}    \right \},
	\end{equation} 
   where $h(x) \in \partial U(x)$ is the subgradient of $U$ at $x \in \mathbb{R}^d$,
	satisfies 
	\begin{equation} \label{weak_convexity_inequality_assumption_new}
		\kappa_{U} (r) \ge \beta - r^{-1} f_L(r), \quad \text{for all} \   \ r>0,
	\end{equation} 
	for some constants $\beta, L >0$,  
	where the function $ 	f_L: [0, \infty] \rightarrow [0, \infty]$ is defined as
	\begin{equation} \label{definition_f_L_assumption_new}
		f_L(r) = 2 L^{1/2} \tanh((r L^{1/2} )/2).
	\end{equation}
\end{defn}
\begin{remark} \label{Remark_notion_weak_convexity}
 The weak convexity profile $\kappa_U$ in \ref{definition_weak_convexity_profile_new} serves as an averaged/integrated convexity
lower bound for the potential $U$, evaluated over pairs of points separated by a distance $r > 0$.  Unlike the standard convexity condition $\kappa_U \ge 0$, which characterizes convex (or log-concave) potentials, Definition \ref{weak_convexity_definition_gentiloni} allows $\kappa_U$ to vary with $r$, thereby admitting non-uniform lower bounds. This generalization yields a broader and more flexible notion of convexity that extends beyond classical log-concavity, making Definition~\ref{weak_convexity_definition_gentiloni} substantially weaker than log-concavity \citep{10.1214/22-AAP1927,conforti2024weak, conforti2023projected, conforti2023quantitative, gentiloni2025beyond}.
\end{remark}

\noindent We modify \citet[Lemma 5.9]{conforti2023projected} to our setting, namely when  $\beta >0$\footnote{See \citet[Lemma B.4]{gentiloni2025beyond} for a similar statement.} to have an explicit expression of the weak convexity constant at each $t \in (0,T]$.
\begin{lem}\citep[Modification of Lemma 5.9]{conforti2023projected} \label{Lemma_contraction_Conforti}
	Assume that U is weakly convex as in Definition  \ref{weak_convexity_definition_gentiloni} and fix $t \in (0,T]$. Then, the function $x \mapsto - \log p_t(x)$ is weakly convex with weak convexity profile $\kappa_{- \log p_t(x)}$ satisfying
	\begin{equation*}
		\kappa_{- \log p_t}(r) \ge \frac{\beta }{\beta + (1-\beta)e^{-2t}} - \frac{e^{-t}}{\beta + (1-\beta)e^{-2t}} \frac{1}{r} f_{L}\left(\frac{e^{-t}}{\beta + (1-\beta)e^{-2 t}} r  \right).
	\end{equation*}
	In particular, the score function satisfies
	\begin{equation}  \label{contraction_score_assumption}
		\langle  \nabla  \log p_t(x)   - \nabla \log p_t(\bar{x}), x - \bar{x} \rangle  \le - \widehat{C}_t  | x - \bar{x}|^2,  \quad \text{for} \  x, \bar{x} \in \mathbb{R}^{\text{d}},
	\end{equation}
	with 
	\begin{equation} \label{weak_convexity_contraction_constant}
		\widehat{C}_t  =  \frac{\beta }{\beta + (1-\beta)e^{-2t}} - \frac{e^{-2t}}{(\beta + (1-\beta)e^{-2t})^2} L.
	\end{equation}
\end{lem}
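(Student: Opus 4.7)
The plan is to follow the argument of Lemma 5.9 in Conforti--Silveri, adapted so as to carry the additional parameter $\beta>0$ from \eqref{weak_convexity_inequality_assumption_new} through the calculation. The starting point is the Ornstein--Uhlenbeck representation: the solution of \eqref{OU_process_introduction} satisfies $X_t = e^{-t} X_0 + \sqrt{1-e^{-2t}}\,Z$ with $Z \sim \mathcal{N}(0, I_d)$ independent of $X_0 \sim \pi_{\mathsf{D}}$, so that $p_t$ is the convolution of the push-forward of $\pi_{\mathsf{D}}$ under $y \mapsto e^{-t}y$ with a centred Gaussian of variance $1-e^{-2t}$. Tweedie's identity then yields
$$\nabla \log p_t(x) = \frac{e^{-t}\,\mathbb{E}[X_0 \mid X_t = x] - x}{1-e^{-2t}},$$
so, writing $V_t := -\log p_t$, the quantity $\langle \nabla V_t(x) - \nabla V_t(\bar x), x - \bar x\rangle$ is controlled by the Lipschitz behaviour of the posterior map $x \mapsto \mathbb{E}[X_0 \mid X_t = x]$.

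The core technical step is a reflection coupling between the two posteriors $(X_0\mid X_t = x)$ and $(X_0\mid X_t = \bar x)$, whose densities are proportional to $\exp\bigl(-U(y) - \tfrac{|x-e^{-t}y|^2}{2(1-e^{-2t})}\bigr)$ and inherit the weak convexity of $U$ enriched by the curvature contribution $e^{-2t}/(1-e^{-2t})$ coming from the Gaussian observation term. Applying Eberle's concave-modification coupling with the distance function encoded by $f_L$ from \eqref{definition_f_L_assumption_new} and tracking how the linear rate $\beta$ and the defect $f_L$ rescale under the OU evolution produces the claimed profile bound
$$\kappa_{-\log p_t}(r) \ge \frac{\beta}{\beta+(1-\beta)e^{-2t}} - \frac{e^{-t}}{\beta+(1-\beta)e^{-2t}}\,\frac{1}{r}\, f_L\!\left(\frac{e^{-t}}{\beta+(1-\beta)e^{-2t}}\,r\right).$$
Relative to the original statement of Lemma 5.9 (where $\beta = 0$), the only effect of $\beta>0$ is that the constant coefficient picks up the additional convexity $\beta/(\beta+(1-\beta)e^{-2t})$, which interpolates between $\beta$ at $t=0$ and $1$ as $t \to \infty$, reflecting the regularising effect of the Gaussian mollification.

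To derive the strong monotonicity constant in \eqref{contraction_score_assumption} from the profile bound, I would use $\tanh(u) \le u$ for $u \ge 0$, which gives $f_L(r) \le L r$. Setting $r' := \frac{e^{-t}}{\beta+(1-\beta)e^{-2t}}\,r$ then yields $r^{-1} f_L(r') \le L\,r'/r = L\,e^{-t}/(\beta+(1-\beta)e^{-2t})$, so that
$$\kappa_{-\log p_t}(r) \ge \frac{\beta}{\beta+(1-\beta)e^{-2t}} - \frac{e^{-2t}\,L}{(\beta+(1-\beta)e^{-2t})^2} = \widehat{C}_t.$$
Since this lower bound is uniform in $r > 0$, the gradient $\nabla V_t$ is $\widehat{C}_t$-strongly monotone, which unwraps directly to \eqref{contraction_score_assumption} after multiplying by $-1$.

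The main obstacle is the middle step: carrying $\beta > 0$ through the reflection-coupling calculation requires checking that the Eberle-type comparison between $f_L$ and its second derivative continues to close once the extra linear term is included, and then propagating this correctly into the argument of $f_L$ under the OU evolution. Once this bookkeeping is verified (essentially parallel to the $\beta = 0$ case treated in \cite{conforti2023projected} and to the related weak-convexity propagation in \cite{gentiloni2025beyond}), the subsequent passage to $\widehat{C}_t$ is routine.
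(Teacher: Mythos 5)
The paper itself gives no proof of this lemma: it is stated as a direct modification of \cite[Lemma 5.9]{conforti2023projected} for $\beta>0$, with a footnote pointing to \cite[Lemma B.4]{gentiloni2025beyond} for a parallel statement, and the technical propagation of the weak-convexity profile through the OU mollification is deferred entirely to those references. Your proposal correctly identifies that same lineage (Tweedie's identity plus the Eberle-type reflection coupling underlying Conforti--Silveri's result) and, importantly, carries out explicitly and correctly the one computation the statement actually adds on top of the cited lemma, namely passing from the rescaled profile bound to $\widehat{C}_t$ via $\tanh(u)\le u$, i.e.\ $f_L(r)\le L r$. You are candid that the middle step --- deriving the specific prefactors $\beta/(\beta+(1-\beta)e^{-2t})$ and $e^{-t}/(\beta+(1-\beta)e^{-2t})$ --- is invoked rather than rederived; that matches the paper's own level of detail, so there is no gap relative to the source, though a fully self-contained argument would still need to work through the coupling estimate with the extra linear term $\beta$ present.
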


\noindent We show that Assumption \ref{assumption_score_strong_monotonicity_new_semi_convexity}-(ii) and Assumption \ref{assumption_score_strong_monotonicity_new_semi_convexity}-(iii)  are related to the notion of weak convexity (Definition \ref{weak_convexity_definition_gentiloni}) in the sense made precise in Proposition \ref{Weak_convexity_implies_contractivity_outside_the_ball_new} below.  An overview of the proof of  Proposition \ref{Weak_convexity_implies_contractivity_outside_the_ball_new} below can be found in Appendix \ref{Weak_convexity_Assumption_2_Proof_Appendix}.
\begin{prop}\label{Weak_convexity_implies_contractivity_outside_the_ball_new}
	Let the data distribution $\pi_{\mathsf{D}}$ be in Assumption \ref{assumption_score_strong_monotonicity_new_semi_convexity}, and let  $f_L$ and $L>0$ be as in Definition \ref{weak_convexity_definition_gentiloni}. 
	Then the potential $U$ is weakly convex as in Definition \ref{weak_convexity_definition_gentiloni} with
	\begin{equation} \label{weak_convexity_profile_mu}
		\begin{split}
			\kappa_{U} (r)  \ge  \mu - r^{-1} f_L(r), \quad \text{for all}    \ r>0,
		\end{split}
	\end{equation} 
	where $\mu>0$ in \ref{weak_convexity_profile_mu} is the strong convexity at infinity constant from Assumption \ref{assumption_score_strong_monotonicity_new_semi_convexity}-(iii). Conversely, if $U$ is weakly convex as in Definition \ref{weak_convexity_definition_gentiloni} with lower bound  \ref{weak_convexity_profile_mu}
	for some known constants $\mu$ and $ L >0$,
	then 
	\begin{enumerate}
		\item The potential $U$ is $\widetilde{\mu} $-strongly convex at infinity  with $\widetilde{\mu}:= \mu - R^{-1}f_L(R)>0$\footnote{We refer to the proof of Proposition \ref{Weak_convexity_implies_contractivity_outside_the_ball_new} in Appendix \ref{Weak_convexity_Assumption_2_Proof_Appendix} below for the derivation of this constant.}, such that for all $ x, \bar{x} \in \mathbb{R}^d$, we have
		\begin{equation}  \label{equation_assumption_strong_convexity_outside_ball}
			\langle h(x) -  h(\bar{x}), x - \bar{x}  \rangle  \ge \widetilde{\mu}  |x - \bar{x}|^2, \qquad \text{when} \quad |x - \bar{x}| \ge R,
		\end{equation}
		which holds for all $R>0$  when $\mu > L$ and for $R \ge R_0 =  \frac{2 z_0}{L^{1/2}}$ with $z_0$ being the solution of \ref{equation_z_mu_weak_convexity} when $\mu \le L$.
		\item  The potential $U$ is $K$-semiconvex, such that  there exists $K \ge 0$  for all $ x, \bar{x} \in \mathbb{R}^d$, 
		\begin{equation}  \label{equation_assumption_semiconvexity_inside_ball}
			\langle  h(x) - h(\bar{x}), x - \bar{x}  \rangle  \ge -K  |x - \bar{x}|^2, \qquad \text{when} \quad |x - \bar{x}| < R,
		\end{equation}
        where $R$ is the same as in point (2).
	\end{enumerate}
	
	
\end{prop}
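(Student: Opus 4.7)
The plan is to isolate a single auxiliary function and run both directions through its monotonicity. Define $g_L(r) := r^{-1} f_L(r) = (2L^{1/2}/r)\tanh(rL^{1/2}/2)$ for $r>0$. Since $\tanh(x)/x$ is strictly decreasing on $(0,\infty)$ with $\lim_{x\to 0^+}\tanh(x)/x=1$ and $\lim_{x\to\infty}\tanh(x)/x=0$, the function $g_L$ is continuous and strictly decreasing with $g_L(0^+)=L$ and $g_L(r)\to 0$ as $r\to\infty$. This single observation drives essentially everything.

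For the forward direction, I would split according to whether $r\ge R$ or $r<R$. When $r\ge R$, Assumption \ref{assumption_score_strong_monotonicity_new_semi_convexity}-(iii) yields $\kappa_U(r)\ge \mu$, and since $g_L\ge 0$, the inequality $\mu - g_L(r)\le \mu \le \kappa_U(r)$ is immediate. When $r<R$, Assumption \ref{assumption_score_strong_monotonicity_new_semi_convexity}-(ii) yields $\kappa_U(r)\ge -K$, so the target estimate $\kappa_U(r)\ge \mu - g_L(r)$ reduces to $g_L(r)\ge \mu+K$. By monotonicity it suffices to verify this at $r=R$, and since $g_L(R)\to\infty$ as $L\to\infty$, one can choose $L=L(K,\mu,R)$ large enough to conclude \eqref{weak_convexity_profile_mu}.

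For the converse, the strong convexity at infinity conclusion follows by taking $r\ge R$ in the assumed weak convexity bound and invoking monotonicity of $g_L$: $\kappa_U(r)\ge \mu - g_L(r)\ge \mu - g_L(R)=\widetilde\mu$, which is \eqref{equation_assumption_strong_convexity_outside_ball}. The sign of $\widetilde\mu$ is then handled by comparing $\mu$ to $\sup_{r>0}g_L(r)=L$: if $\mu>L$, then $g_L(R)\le L<\mu$ for any $R>0$; if $\mu\le L$, the substitution $z:=RL^{1/2}/2$ turns the critical equation $g_L(R)=\mu$ into $\tanh(z)/z=\mu/L$, which has a unique nonnegative solution $z_0$ by monotonicity of $\tanh(z)/z$, corresponding to \eqref{equation_z_mu_weak_convexity}, so that $R\ge R_0=2z_0/L^{1/2}$ gives $\widetilde\mu\ge 0$.

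For the semiconvexity conclusion, I would use the global bound $g_L(r)\le L$ to write $\kappa_U(r)\ge \mu - L$ for every $r>0$, which directly yields \eqref{equation_assumption_semiconvexity_inside_ball} with $K=\max\{0,L-\mu\}$ valid for any choice of $R\ge 0$ (in particular the same $R$ used in the first part). The only mildly technical point is verifying the monotonicity and limits of $g_L$ cleanly and matching the auxiliary equation in the case $\mu\le L$ to the form \eqref{equation_z_mu_weak_convexity}; once $g_L$ has been analysed, the rest is bookkeeping between the weak convexity profile \eqref{definition_weak_convexity_profile_new} and the two regimes of Assumption \ref{assumption_score_strong_monotonicity_new_semi_convexity}.
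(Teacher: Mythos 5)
Your proposal is correct and follows essentially the same route as the paper: both rest on the monotonicity of $g_L(r)=r^{-1}f_L(r)$, split the forward direction into $r\ge R$ (Assumption \ref{assumption_score_strong_monotonicity_new_semi_convexity}-(iii)) versus $r<R$ (Assumption \ref{assumption_score_strong_monotonicity_new_semi_convexity}-(ii)), and use the substitution $z=RL^{1/2}/2$ and the monotone decrease of $\tanh(z)/z$ for the converse. The one small divergence is in the forward step for $r<R$: you establish the existence of a suitable $L$ by a limiting argument ($g_L(R)\to\infty$ as $L\to\infty$), whereas the paper actually solves $g_L(R)=K+\mu$ via the equation $x\tanh(x)=\tfrac{K+\mu}{4}R^2$ and shows its solution $x^{\star}$ satisfies $x^{\star}>\tfrac{\sqrt{K+\mu}}{2}R$, hence $L>K+\mu$; that explicit lower bound on $L$ is what later feeds Remark \ref{remark_limit_behaviour_contraction_constant} (the observation that $\mu-L<-K$, motivating the proxy $L\approx K+\mu$). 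Your argument proves the proposition as stated, but if you want the downstream discussion to go through unchanged you should also record that the $L$ you produce can be taken to satisfy $L>K+\mu$, which follows from $\tanh(x)\le x$ exactly as in the paper.
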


\noindent As a consequence of Proposition \ref{Weak_convexity_implies_contractivity_outside_the_ball_new} and Lemma \ref{Lemma_contraction_Conforti}, one obtains the explicit form of $\widehat{C}_t$ in \ref{contraction_score_assumption} in our setting, which is given in the following corollary.
\begin{cor} \label{corollary_constant_contractivity_at_infinity_semiconvexity}
	Let $U$ be $K$-semiconvex as in Assumption \ref{assumption_score_strong_monotonicity_new_semi_convexity}-(ii)  and be $\mu$-strongly convex at infinity  as in Assumption \ref{assumption_score_strong_monotonicity_new_semi_convexity}-(iii) and fix $t \in (0,T]$. Then  
	\begin{equation}  \label{contraction_score_assumption_constant_semiconvexity}
		\langle  \nabla  \log p_t(x)   - \nabla \log p_t(\bar{x}), x - \bar{x} \rangle  \le -  \beta_t^{\text{OS}}  | x - \bar{x}|^2, \qquad \text{for} \quad x,   \bar{x} \in \mathbb{R}^d, 
	\end{equation}
	where \begin{equation} \label{weak_convexity_contraction_constant_semiconvexity_alpha_bar}
		\beta^{\text{OS}}_t   =  \frac{\mu }{\mu+ (1-\mu  )e^{-2t}} - \frac{e^{-2t}}{(\mu + (1-\mu  )e^{-2t})^2} L,
	\end{equation} 
	for some $L>0$ satisfying \ref{weak_convexity_profile_mu}.
\end{cor}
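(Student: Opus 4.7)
The plan is to obtain the corollary by chaining the two results that have just been established: Proposition \ref{Weak_convexity_implies_contractivity_outside_the_ball_new} translates the pair of conditions in Assumption \ref{assumption_score_strong_monotonicity_new_semi_convexity}-(ii)(iii) into a weak convexity lower bound on $\kappa_U$, and Lemma \ref{Lemma_contraction_Conforti} then propagates this bound along the Ornstein--Uhlenbeck semigroup to produce the time-dependent contractivity constant for the score.

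First, I would apply the forward direction of Proposition \ref{Weak_convexity_implies_contractivity_outside_the_ball_new}. Since $U$ is assumed $K$-semiconvex and $\mu$-strongly convex at infinity, the proposition gives
\begin{equation*}
	\kappa_U(r) \ge \mu - r^{-1} f_L(r), \qquad r>0,
\end{equation*}
for some $L>0$, which is precisely the weak convexity condition of Definition \ref{weak_convexity_definition_gentiloni} with constants $\beta = \mu$ and $L$. In particular, the potential $U$ fits the hypotheses of Lemma \ref{Lemma_contraction_Conforti}.

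Next, I would invoke Lemma \ref{Lemma_contraction_Conforti} with the identification $\beta \leftarrow \mu$. The lemma yields, for any fixed $t \in [0,T]$, the inequality \eqref{contraction_score_assumption} with contraction constant
\begin{equation*}
	\widehat{C}_t = \frac{\mu}{\mu + (1-\mu)e^{-2t}} - \frac{e^{-2t}}{(\mu + (1-\mu)e^{-2t})^2}\,L,
\end{equation*}
which is exactly the expression $\beta_t^{\text{OS}}$ defined in \eqref{weak_convexity_contraction_constant_semiconvexity_alpha_bar}. Renaming $\widehat{C}_t$ as $\beta_t^{\text{OS}}$ in \eqref{contraction_score_assumption} gives \eqref{contraction_score_assumption_constant_semiconvexity}, and the proof is complete.

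The argument is essentially a bookkeeping composition, so no single step poses a serious obstacle; the only subtlety worth flagging is ensuring that the constant $L$ produced by Proposition \ref{Weak_convexity_implies_contractivity_outside_the_ball_new} is the same $L$ that one feeds into Lemma \ref{Lemma_contraction_Conforti}. This is automatic because the lemma is stated for an arbitrary admissible pair $(\beta,L)$ witnessing weak convexity, and the proposition produces one such admissible pair with $\beta = \mu$. Consequently, the claim of the corollary reduces to a direct substitution, and I would present the proof as two short paragraphs following this outline.
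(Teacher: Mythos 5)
Your proposal is correct and matches the paper's own reasoning exactly: the paper presents Corollary \ref{corollary_constant_contractivity_at_infinity_semiconvexity} as a direct consequence of the forward direction of Proposition \ref{Weak_convexity_implies_contractivity_outside_the_ball_new} (yielding weak convexity with $\beta = \mu$) followed by Lemma \ref{Lemma_contraction_Conforti}. Your remark about the compatibility of the constant $L$ is a fair point of care, and your resolution is the right one.
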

\begin{remark} \label{remark_limit_behaviour_contraction_constant}
	By Corollary \ref{corollary_constant_contractivity_at_infinity_semiconvexity} and the proof of Proposition \ref{Weak_convexity_implies_contractivity_outside_the_ball_new}, we have 
	\begin{equation} \label{limit_zero_contractivity_constant}
		\lim_{t \rightarrow 0} 	\beta^{\text{OS}}_t   = \mu -L < -K,
	\end{equation}
    which shows that $ - \beta^{\text{OS}}_t$ is not the lowest bound for the left-hand side of \ref{contraction_score_assumption_constant_semiconvexity}.
	We emphasize that the gap between the limit on the left-hand side of \ref{limit_zero_contractivity_constant} and the semiconvexity constant $K$ is due to the particular choice of $f_L$ in \ref{definition_f_L_assumption_new} in Proposition  \ref{Weak_convexity_implies_contractivity_outside_the_ball_new}. This gap may vanish if we replace $f_L$ with an appropriate function $f \in \mathcal{\widetilde{F}}$\citep[Section 2.1.2]{conforti2023quantitative}, \citep[Section 5.3.1]{conforti2023projected}, where
	\begin{equation*}
		\begin{split}
			\mathcal{\widetilde{F}} := \Bigg \{ & f \in C^2((0,\infty), \mathbb{R}_+): \ r \mapsto r^{1/2} f(r^{1/2}), \ \text{non-decreasing, concave, bounded such that} \\ & \quad  \lim_{r \downarrow 0 } r f(r) = 0, \ f^{'} \ge 0, \quad  2 f^{''} + f f^{'} \le 0   \Bigg \}.
		\end{split}
	\end{equation*}
Note that $\mathcal{\widetilde{F}} $ is non-empty and contains $r \mapsto 2 \tanh(r/2)$.
	For this reason, we use the constant $K+\mu$ as a proxy of the constant $L$ and replace \ref{weak_convexity_contraction_constant_semiconvexity_alpha_bar}  with the following monotonicity bound
	\begin{equation} \label{weak_convexity_contraction_constant_K_mu}
		\beta^{\text{OS}, K, \mu}_t   =  \frac{\mu }{\mu+ (1-\mu  )e^{-2t}} - \frac{e^{-2t}}{(\mu + (1-\mu  )e^{-2t})^2} (K+\mu).
	\end{equation} 
	Moreover, it holds that
	\begin{equation*}
		\lim_{t \rightarrow 0} 	 \beta^{\text{OS}, K, \mu}_t  =-K\footnote{Indeed, this shows that  - $\beta^{\text{OS}, K, \mu}_0 < -\beta^{\text{OS}}_0$ for the right-hand side of \ref{contraction_score_assumption_constant_semiconvexity} in Corollary \ref{corollary_constant_contractivity_at_infinity_semiconvexity}.}, 
	\end{equation*}
	and
	\begin{equation*}
		\lim_{t \rightarrow \infty} 	\beta^{\text{OS}}_t   =  \lim_{t \rightarrow \infty} \beta^{\text{OS}, K, \mu}_t  =1, 
	\end{equation*}
	which is consistent with $\pi_{\infty} \sim \mathcal{N}(0, I_d)$,  the invariant distribution of the OU process.  
\end{remark}

\noindent Using the explicit expression of \ref{weak_convexity_contraction_constant_K_mu}, we are able to find a time for which the integral of the monotonicity bound\footnote{Note that $	\beta^{\text{OS}, K, \mu}_t  $ is a function of time.} $	\beta^{\text{OS}, K, \mu}_t  $  is positive. The proof of the following result is postponed to Appendix \ref{Weak_convexity_Assumption_2_Proof_Appendix}.
\begin{prop} \label{Time_integral_contraction_constant_proposition}
	Let $\mu>0$ and $K \ge 0$. The time integral of	$\beta^{\text{OS}, K, \mu}_t  $ from Remark \ref{remark_limit_behaviour_contraction_constant} is
	\begin{equation} \label{expression_constant_A_t}
		\begin{split} 
			B(t,0,  \mu, K)  & = \int_{0}^{t}  \left(\frac{\mu }{\mu+ (1-\mu  )e^{-2s}} - \frac{e^{-2s}}{(\mu + (1-\mu  )e^{-2s})^2} (K + \mu) \right) \ \rmd s 
			\\ & =  \frac{1}{2} \left[ \log \left(  \mu (e^{2t}-1) +1    \right)
			+   \left( \frac{K  }{\mu } + 1 \right) \left( \frac{1}{\mu (e^{2t} -1) +1 } - 1 \right) \right] >0,
		\end{split}
	\end{equation}
	when $t> t^{\star} > \bar{t}:=  \ln \left(\sqrt{ 1+ \frac{K}{\mu^2} } \right)$ with $t^{\star}:= \inf \left \{ t>0:   B(t,0,  \mu, K)  > 0  \right \}$.
\end{prop}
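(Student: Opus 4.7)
The strategy is to first evaluate the integral explicitly via a single substitution and then extract positivity and the lower bound on $t^{\star}$ from an elementary sign analysis of $B'$.

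For the closed-form expression, I would split the integrand into the two pieces displayed in the statement and evaluate each using the substitution $u = \mu e^{2s} + (1-\mu)$, which satisfies $\rmd u = 2\mu e^{2s}\,\rmd s$ and is well-defined for every $\mu > 0$. After multiplying numerator and denominator of the first piece by $e^{2s}$ and those of the second piece by $e^{4s}$, the integrands become $\mu e^{2s}/u$ and $e^{2s}/u^{2}$ respectively, so both integrals reduce to elementary antiderivatives in $u$. Evaluating at $u(0)=1$ and $u(t) = \mu(e^{2t}-1)+1$ yields $\tfrac{1}{2}\log(\mu(e^{2t}-1)+1)$ for the first term and $\tfrac{1}{2\mu}\bigl(1-1/(\mu(e^{2t}-1)+1)\bigr)$ for the second (before multiplication by $K+\mu$); combining them produces exactly the formula stated in \eqref{expression_constant_A_t}.

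For the bound on $t^{\star}$, the natural route is to examine $B'(t)=\beta^{\text{OS},K,\mu}_{t}$ rather than $B$ itself. A direct algebraic simplification gives
\begin{equation*}
B'(t) = \frac{\mu^{2} - e^{-2t}(\mu^{2}+K)}{\bigl(\mu + (1-\mu)e^{-2t}\bigr)^{2}},
\end{equation*}
where the denominator is positive for all $\mu>0$ and all $t \ge 0$ (as can be checked separately for $\mu\le 1$ and $\mu>1$). Hence $B'(t)=0$ at the unique point $t_{1}=\log\sqrt{1+K/\mu^{2}}$, with $B'<0$ on $[0,t_{1})$ and $B'>0$ on $(t_{1},\infty)$. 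Since $B(0)=0$ (immediate from the closed form) and the closed form shows $B(t)\to\infty$ as $t\to\infty$, it follows that $B$ strictly decreases on $[0,t_{1}]$ to a strictly negative value $B(t_{1})<0$ (when $K>0$) and then strictly increases to $+\infty$. Therefore $B$ has a unique zero in $(0,\infty)$, which by definition is $t^{\star}$, and $t^{\star} > t_{1} = \log\sqrt{1+K/\mu^{2}}$; positivity of $B(t,0,\mu,K)$ for all $t>t^{\star}$ is then immediate from strict monotonicity on $[t_{1},\infty)$.

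The only computational subtlety is phrasing the substitution as $u = \mu e^{2s}+(1-\mu)$ rather than the more tempting $v = \mu+(1-\mu)e^{-2s}$, so that the argument is uniform in $\mu>0$ without a separate inspection of the case $\mu=1$; the remaining steps are elementary single-variable calculus and I do not anticipate a genuine obstacle.
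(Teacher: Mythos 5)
Your proof is correct and takes essentially the same route as the paper: both locate the unique critical point of $B$ (equivalently, the sign change of the integrand $\beta^{\text{OS},K,\mu}_t$) at $t_1=\ln\sqrt{1+K/\mu^2}$, note $B(0)=0$ and $B(t)\to\infty$, and deduce that the unique zero $t^\star$ lies to the right of $t_1$. The paper works in the variable $x=e^{2t}-1$ and differentiates the closed form $g(\mu x+1)$ directly, then checks $g(K/\mu+1)=\log(K/\mu+1)-K/\mu<0$ by hand; you work in $t$, observe that $B'=\beta^{\text{OS},K,\mu}_t$ simplifies to $\bigl(\mu^2-e^{-2t}(\mu^2+K)\bigr)/\bigl(\mu+(1-\mu)e^{-2t}\bigr)^2$, and get $B(t_1)<0$ automatically from $B(0)=0$ and strict decrease on $[0,t_1)$ — a marginally cleaner bookkeeping but the same argument.
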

\begin{remark}
	If we consider the case when $K=0$ in Assumption \ref{assumption_score_strong_monotonicity_new_semi_convexity}-(ii), then \ref{expression_constant_A_t} is satisfied for all $t >0$.
\end{remark}
\begin{figure}
    \centering
\includegraphics[width=0.75\linewidth]{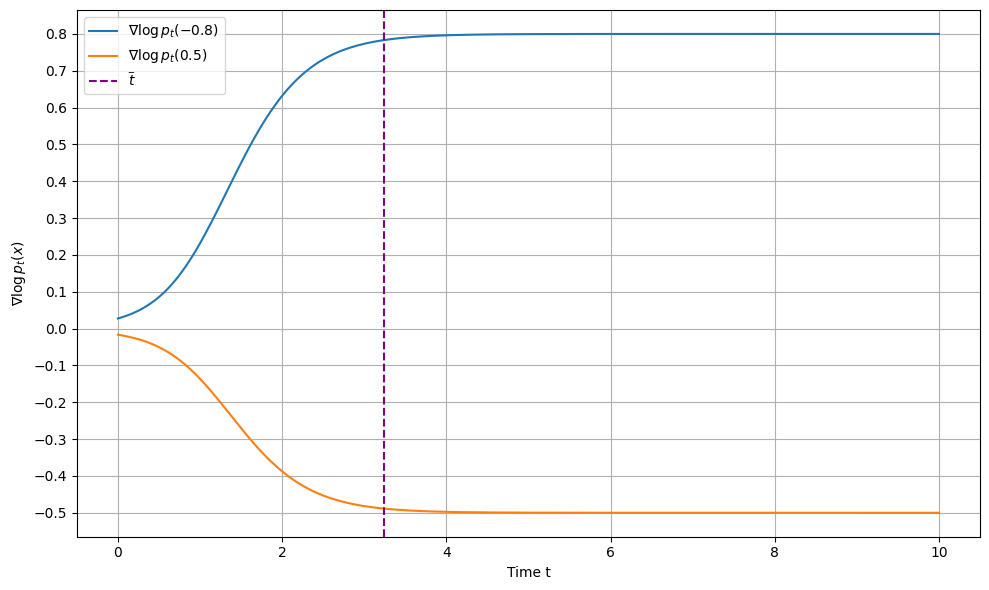}
    \caption{Score function \ref{score_function_numerical_example} for fixed $x$ and time $\bar{t}$.}
    \label{fig:Numerical_experiment_score_function}
\end{figure}
\begin{remark}
    We provide a numerical illustration of the critical time of Proposition \ref{expression_constant_A_t} when a non-log-concave distribution becomes log-concave in the case when $ \pi_{\mathsf{D}}$ is a one-dimensional Gaussian mixture with two equi-weighted modes $\eta = 2$, each mode having same variance $s^2=9 $, namely
    \begin{equation} \label{density_Gaussian_mixture_numerical_experiment_remark}
    \begin{split}
       \pi_{\mathsf{D}}(\rmd x) & = (2 (18 \pi )^{1/2})^{-1} \left(  \exp \left( - \frac{|x - 2|^2}{18}  \right) + \exp \left( - \frac{|x + 2 |^2}{18}  \right) \right) \rmd x.
    \end{split}
\end{equation}
For this choice of $\pi_{\mathsf{D}}$, the semiconvexity constant $K = \frac{2\eta^2}{(s^2)^2}=\frac{8}{81}$, and the strong convexity at infinity constant $\mu= \frac{s^2-2 \eta^2}{(s^2)^2} =\frac{1}{81} $. Here, $R \ge 0$. 
The score function of \ref{density_Gaussian_mixture_numerical_experiment_remark} is given by
\begin{equation} \label{score_function_numerical_example}
			\begin{split}
					\nabla 	\log p_t(x)
                    = -  \frac{x}{9 m_t^2 + \sigma_t^2} +  \frac{   2 m_t  }{9 m_t^2 + \sigma_t^2} \frac{   \exp \left\{  \frac{-\left( x   -  2 m_t      \right)^2}{2 (9 m_t^2 + \sigma_t^2)}    \right\}
					-    \exp \left\{  \frac{-\left( x   +  2 m_t      \right)^2}{2 (9 m_t^2 + \sigma_t^2)}    \right\} }{   \exp \left\{  \frac{-\left( x   -  2 m_t      \right)^2}{2 (9 m_t^2 + \sigma_t^2)}    \right\}
					+    \exp \left\{  \frac{-\left( x   +  2 m_t      \right)^2}{2 (9 m_t^2 + \sigma_t^2)}    \right\}}, \quad t \in [0,T], \ x \in \mathbb{R},
                    \end{split}
                    \end{equation}
\end{remark}
where $m_t=e^{-t}$ and $\sigma_t^2=1-e^{-2t}$ comes from the representation of the OU process in \ref{eq:OUdistribtuion}. Figure \ref{fig:Numerical_experiment_score_function} displays the time behaviour of score function \ref{score_function_numerical_example} over the time interval $[0,10]$   for fixed values $x = -0.8$ and $x = 0.5$. We also indicate the time $\bar{t}=  \ln \left(\sqrt{ 1+ \frac{K}{\mu^2} } \right) \approx  3.2377 < t^{\star}$ from Proposition \ref{score_function_numerical_example}. The score function in Figure \ref{fig:Numerical_experiment_score_function} converges to $\nabla \log( \pi_{\infty}(x)) =-x$, where $\pi_{\infty} = \mathcal{N}(0,1)$.

	\subsection{Main Results - Optimal Data Dimensional Dependence and Rate of Convergence} \label{main_result_section}
\noindent	The main results are stated as follows. An overview of their proofs can be found in Appendix \ref{proof_of_main_Theorem_appendix_section}. 
\begin{thm} \label{main_theorem_general}
	Let Assumptions \ref{general_assumption_algorithm}, \ref{assumption_score_strong_monotonicity_new_semi_convexity},  \ref{Assumption_2_without_derivative} and  \ref{assumption_equivalence_global_minimiser_epsilon} hold.
	Then, there exist constants $C_1$, $C_2$, $C_3$ and $C_4 >0$ such that	for any $T>0$ and $\gamma, \epsilon \in (0,1)$, 
	\begin{equation} \label{final_bound_first_inequality_statement_theorem}
		\begin{split}
			W_2(\mathcal{L}(Y_{J}^{\text{EM}}),\pi_{\mathsf{D}})  \le C_1 \sqrt{  \epsilon} + C_2 e^{   - 2  	\int_{\epsilon}^{T}  	\beta^{\text{OS}, K, \mu}_t   \ \rmd t   -\epsilon} +  C_3(T,\epsilon)  \sqrt{\varepsilon_{\text{SN}}} + C_4(T,\epsilon)  \gamma^{1/2},
		\end{split}
	\end{equation}
	where $C_1$, $C_2$, $C_3$ and $C_4$ are given explicitly in Table \ref{tab:convconst_general} (Appendix \ref{Table_of_constants_appendix}), $	\beta^{\text{OS}, K, \mu}_t $ is defined in \ref{weak_convexity_contraction_constant_K_mu}, and its integral is computed in Proposition \ref{Time_integral_contraction_constant_proposition}.
	In addition, the  result in \ref{final_bound_first_inequality_statement_theorem} implies that for any $\delta>0$, if we choose $0 < \epsilon<\epsilon_{\delta}$, $T>T_{\delta}$, $ 0 < \varepsilon_{\text{SN}} < \varepsilon_{\text{SN}, \delta}$ and $0< \gamma < \gamma_{\delta}$ with  $\epsilon_{\delta}$, $T_{\delta}$, $\varepsilon_{\text{SN}, \delta}$, and $\gamma_{\delta}$ given in Table \ref{tab:convconst_general}, then
	\begin{equation*}
		W_2(\mathcal{L}(Y_{J}^{\text{EM}}),\pi_{\mathsf{D}}) <~\delta.
	\end{equation*}		
\end{thm}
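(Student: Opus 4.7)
\noindent \textbf{Proof plan for Theorem \ref{main_theorem_general}.} The plan is to introduce the four auxiliary processes already built in Section \ref{Technical_Background_section} — the true backward process $(Y_t)$ from \eqref{eq:backwardproc_real_initial_condition_introduction}, the process $(\widetilde{Y}_t)$ from \eqref{Y_hat_auxiliary} with exact score but initialized at the invariant measure $\pi_\infty$, the auxiliary process $(Y_t^{\text{aux}})$ from \eqref{Y_auxiliary_theta_hat} driven by the learned score $s(\cdot,\hat{\theta},\cdot)$, and the continuous-time EM interpolation $(\widehat Y_t^{\text{EM}})$ from \eqref{continuous_time_EM_version} — and to apply the triangle inequality on $\mathcal{P}(\mathbb{R}^d)$ to split $W_2(\mathcal{L}(Y_K^{\text{EM}}),\pi_{\mathsf{D}})$ into four contributions, each mapped onto one of the four terms on the right-hand side of \eqref{final_bound_first_inequality_statement_theorem}. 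The truncation at $T-\epsilon$ means that $Y_{T-\epsilon}=X_\epsilon$ under the time reversal, so the distance $W_2(\mathcal{L}(Y_{T-\epsilon}),\pi_{\mathsf{D}})$ is controlled by a short-time OU estimate: writing $X_\epsilon = e^{-\epsilon}X_0 + \sqrt{1-e^{-2\epsilon}}\,Z$ yields a bound of order $\sqrt{\epsilon}\,(1+\mathbb{E}[|X_0|^2]^{1/2})$, producing the first term $C_1\sqrt{\epsilon}$.

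\noindent The second term, quantifying the mismatch between $\widetilde{Y}$ and $Y$ (same dynamics, different initial distributions), is handled by a synchronous coupling of \eqref{eq:backwardproc_real_initial_condition_introduction} and \eqref{Y_hat_auxiliary}. I would apply Itô's formula to $|Y_t-\widetilde{Y}_t|^2$, use the one-sided bound on $\nabla\log p_{T-t}$ from Corollary \ref{corollary_constant_contractivity_at_infinity_semiconvexity} (with the proxy constant $\beta^{\text{OS},K,\mu}_{T-t}$ from \eqref{weak_convexity_contraction_constant_K_mu}), and integrate between $0$ and $T-\epsilon$ via Grönwall, so that the initial gap — which is $W_2(\mathcal{L}(X_T),\pi_\infty)$ and decays exponentially in $T$ by ergodicity of the OU semigroup — gets multiplied by the contraction factor $\exp(-2\int_\epsilon^T \beta^{\text{OS},K,\mu}_t\,\rmd t)$. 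Proposition \ref{Time_integral_contraction_constant_proposition} guarantees that the integrand becomes positive past the explicit threshold $t^\star$, so the exponent is genuinely contractive for $T$ large enough; the factor $e^{-\epsilon}$ arises from the OU decay on the first segment.

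\noindent The third term bounds $W_2(\mathcal{L}(Y^{\text{aux}}_{T-\epsilon}),\mathcal{L}(\widetilde{Y}_{T-\epsilon}))$. Using again a synchronous coupling with the same Brownian motion, applying Itô to $|\widetilde Y_t - Y_t^{\text{aux}}|^2$, splitting the drift difference into $\nabla\log p_{T-t}(\widetilde Y_t)-\nabla\log p_{T-t}(Y_t^{\text{aux}})$ (handled by the same one-sided estimate) and $\nabla\log p_{T-t}(Y_t^{\text{aux}})-s(T-t,\hat\theta,Y_t^{\text{aux}})$ (handled by Cauchy–Schwarz and Assumption \ref{assumption_equivalence_global_minimiser_epsilon}), gives a Grönwall bound yielding the $\sqrt{\varepsilon_{\text{SN}}}$ term. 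For the fourth term — the discretization error between $\widehat Y^{\text{EM}}$ and $Y^{\text{aux}}$ — I would run the standard Euler–Maruyama one-step analysis, combining the Lipschitz-in-space and Hölder-in-time controls on $s$ from Assumption \ref{Assumption_2_without_derivative} with uniform moment bounds on both processes (derived from linear-growth estimates implied by the Assumption and the contractivity at infinity). This yields the $O(\gamma^{1/2})$ order, since without Assumption $3'$ on $\nabla_x s$ one cannot exploit Itô-type cancellations to upgrade to $O(\gamma)$.

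\noindent The hardest part will be making the contractivity argument for the second and third terms rigorous despite the discontinuous gradient of $U$: even though $\partial U$ is only a subdifferential, one must invoke Proposition \ref{density_OU_smoothness_proposition} to ensure that $\nabla\log p_t$ is continuous for $t>0$, and then transfer the semiconvexity/strong-convexity-at-infinity of $U$ into the global one-sided bound \eqref{contraction_score_assumption_constant_semiconvexity} via Corollary \ref{corollary_constant_contractivity_at_infinity_semiconvexity}; this is exactly what allows the Itô argument to avoid any regularity requirement at $t=0$. Finally, for the $\delta$-claim, I would set each of the four terms equal to $\delta/4$ and invert: $\epsilon_\delta=(\delta/(4C_1))^2$, $T_\delta$ chosen so that $2\int_{\epsilon_\delta}^{T_\delta}\beta^{\text{OS},K,\mu}_t\,\rmd t+\epsilon_\delta\ge \log(4C_2/\delta)$ (feasible by Proposition \ref{Time_integral_contraction_constant_proposition} and $\beta^{\text{OS},K,\mu}_t\to 1$), $\varepsilon_{\text{SN},\delta}=(\delta/(4C_3(T_\delta,\epsilon_\delta)))^2$ and $\gamma_\delta=(\delta/(4C_4(T_\delta,\epsilon_\delta)))^2$, giving the table of constants referenced in the statement.
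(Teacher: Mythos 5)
Your proposal follows essentially the same route as the paper: the same four-term triangle-inequality decomposition across $(Y_t)$, $(\widetilde Y_t)$, $(Y_t^{\text{aux}})$, $(\widehat Y_t^{\text{EM}})$; the same OU short-time estimate for the early-stopping error; the same synchronous-coupling-plus-It\^o-plus-Gr\"onwall arguments for the initialization and score-approximation errors using Corollary \ref{corollary_constant_contractivity_at_infinity_semiconvexity}; the same EM one-step analysis via the moment and one-step-error lemmas (Lemmas \ref{lem:EMproc2ndbdgeneral}--\ref{lem:distance_EM_scheme}) yielding the $\gamma^{1/2}$ rate; and the same $\delta/4$ allocation to derive Table \ref{tab:convconst_general}. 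Minor cosmetic differences (e.g.\ you invoke Cauchy--Schwarz where the paper uses Young's inequality for the score-error splitting, and you describe the initialization gap as $W_2(\mathcal{L}(X_T),\pi_\infty)$ while the paper computes $\mathbb{E}[|m_T X_0+(\sigma_T-1)\widetilde Y_0|^2]$ directly) do not change the structure of the argument.
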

\begin{remark}
	The constant $C_4(T,\epsilon)$ in the error bound in \ref{final_bound_first_inequality_statement_theorem} contains the optimal dependence of the data dimension, i.e. $O(\sqrt{d})$, which has been found under the more strict assumption of strong-log concavity of $\pi_{\mathsf{D}}$ in \citet[Theorem 1 and Remark 12]{bruno2025on}.  However, the optimal dependence of the dimension is achieved at the expenses of a worst rate of covergence of order $1/2$. 
\end{remark}	

\noindent The optimal rate of convergence of order $\alpha \in [\frac{1}{2},1]$ for the Euler or Milstein scheme of SDEs with constant diffusion coefficients can be attained in Theorem \ref{main_theorem_general} provided that $	\mathbb{E} [ | \hat{\theta} |^4] < \infty$ and that  Assumption \ref{Assumption_2_without_derivative} is replaced by Assumption \ref{Assumption_2},  as stated in Theorem \ref{main_theorem_general_better_rate_of_convergence} below.

%

\begin{thm} \label{main_theorem_general_better_rate_of_convergence}
	Let Assumptions \ref{general_assumption_algorithm}, \ref{assumption_score_strong_monotonicity_new_semi_convexity},  \ref{Assumption_2} and  \ref{assumption_equivalence_global_minimiser_epsilon} hold, and assume that $	\mathbb{E} [ | \hat{\theta} |^4] < \infty$
	Then, there exist constants $C_1$, $C_2$, $C_3$ and $\widetilde{C}_4 >0$ such that	for any $T>0$ and $\gamma, \epsilon \in (0,1)$, 
	\begin{equation} \label{final_bound_first_inequality_statement_theorem_better_rate_of_convergence}
		\begin{split}
			W_2(\mathcal{L}(Y_{J}^{\text{EM}}),\pi_{\mathsf{D}})   \le C_1 \sqrt{  \epsilon} + C_2 e^{  - 2  		\int_{\epsilon}^{T}  	\beta^{\text{OS}, K, \mu}_t   \ \rmd t -\epsilon} +  C_3(T,\epsilon)  \sqrt{\varepsilon_{\text{SN}}} + \widetilde{C}_4(T,\epsilon)  \gamma^{\alpha},
		\end{split}
	\end{equation}
	where $C_1$, $C_2$, $C_3$ and $\widetilde{C}_4$ are given explicitly in Table \ref{tab:convconst_general} (Appendix \ref{Table_of_constants_appendix}), $	\beta^{\text{OS}, K, \mu}_t $ is defined in \ref{weak_convexity_contraction_constant_K_mu}, and its integral is computed in Proposition \ref{Time_integral_contraction_constant_proposition}.
	In addition, the  result in \ref{final_bound_first_inequality_statement_theorem_better_rate_of_convergence} implies that for any $\delta>0$, if we choose $0 < \epsilon<\epsilon_{\delta}$, $T>T_{\delta}$, $ 0 < \varepsilon_{\text{SN}} < \varepsilon_{\text{SN}, \delta}$ and $0< \gamma < \widetilde{\gamma}_{\delta}$ with  $\epsilon_{\delta}$, $T_{\delta}$, $\varepsilon_{\text{SN}, \delta}$, and $\widetilde{\gamma}_{\delta}$ given in Table \ref{tab:convconst_general}, then
	\begin{equation} \label{final_bound_first_inequality_statement_theorem_better_rate_of_convergence_small_error_constants}
		W_2(\mathcal{L}(Y_{J}^{\text{EM}}),\pi_{\mathsf{D}}) <~\delta.
	\end{equation}		
\end{thm}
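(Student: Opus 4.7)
The plan is to follow the standard four-step decomposition used to prove Theorem \ref{main_theorem_general}, modifying only the terminal discretisation-error term to take advantage of the extra gradient regularity in Assumption \ref{Assumption_2} and the $L^4$-integrability of $\hat{\theta}$. Using the triangle inequality through the auxiliary processes \eqref{eq:backwardproc_real_initial_condition_introduction}, \eqref{Y_hat_auxiliary}, \eqref{Y_auxiliary_theta_hat} and \eqref{continuous_time_EM_version}, I would write
\begin{equation*}
W_2(\mathcal{L}(Y_K^{\text{EM}}),\pi_{\mathsf{D}}) \le W_2(\mathcal{L}(Y_{T-\epsilon}),\pi_{\mathsf{D}}) + W_2(\mathcal{L}(\widetilde{Y}_{T-\epsilon}), \mathcal{L}(Y_{T-\epsilon})) + W_2(\mathcal{L}(Y^{\text{aux}}_{T-\epsilon}), \mathcal{L}(\widetilde{Y}_{T-\epsilon})) + W_2(\mathcal{L}(\widehat{Y}^{\text{EM}}_{T-\epsilon}), \mathcal{L}(Y^{\text{aux}}_{T-\epsilon})),
\end{equation*}
giving the four contributions $C_1\sqrt{\epsilon}$, $C_2 e^{-2\int_\epsilon^T \beta^{\text{OS},K,\mu}_t\,\rmd t - \epsilon}$, $C_3(T,\epsilon)\sqrt{\varepsilon_{\text{SN}}}$ and $\widetilde{C}_4(T,\epsilon)\gamma^\alpha$. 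The first term comes directly from the OU regularity of $(X_t)$ close to $t=0$. The second is obtained through a synchronous coupling of $Y$ and $\widetilde{Y}$ (initialised from $\mathcal{L}(X_T)$ and $\pi_\infty$ respectively) combined with the contractivity \eqref{contraction_score_assumption_constant_semiconvexity} of Corollary \ref{corollary_constant_contractivity_at_infinity_semiconvexity} applied over $[\epsilon, T-\epsilon]$ and the OU convergence $W_2(\mathcal{L}(X_T), \pi_\infty) \le C e^{-T}$; the resulting exponential envelope features the time integral $B(T-\epsilon, 0, \mu, K)$ of Proposition \ref{Time_integral_contraction_constant_proposition}. The third term is handled by a synchronous coupling of $\widetilde{Y}$ and $Y^{\text{aux}}$, an It\^o expansion of $|Y^{\text{aux}}_t - \widetilde{Y}_t|^2$, and Gr\"onwall's inequality together with Assumption \ref{assumption_equivalence_global_minimiser_epsilon}. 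None of these three contributions uses the additional hypotheses.

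The real work, and the only point where Assumption \ref{Assumption_2} and the $L^4$ bound are needed, lies in sharpening the fourth term from $O(\gamma^{1/2})$ to $O(\gamma^\alpha)$. Applying It\^o's formula to $e_t := |\widehat{Y}^{\text{EM}}_t - Y^{\text{aux}}_t|^2$ on $[\epsilon, T-\epsilon]$ generates the drift discrepancy
\begin{equation*}
(\widehat{Y}^{\text{EM}}_{\lfloor t/\gamma\rfloor\gamma} - Y^{\text{aux}}_t) + 2\bigl(s(T-\lfloor t/\gamma\rfloor\gamma, \hat{\theta}, \widehat{Y}^{\text{EM}}_{\lfloor t/\gamma\rfloor\gamma}) - s(T-t, \hat{\theta}, Y^{\text{aux}}_t)\bigr),
\end{equation*}
which I would split into a monotone part $\widehat{Y}^{\text{EM}}_t - Y^{\text{aux}}_t + 2(s(T-t, \hat{\theta}, \widehat{Y}^{\text{EM}}_t) - s(T-t, \hat{\theta}, Y^{\text{aux}}_t))$, a time-regularity part bounded by the $\alpha$-H\"older estimate in Assumption \ref{Assumption_2_without_derivative} through $D_1(\hat{\theta},\hat{\theta})\,\gamma^\alpha$ and $D_3(t,\lfloor t/\gamma\rfloor\gamma)$, and a one-step drift gap $\widehat{Y}^{\text{EM}}_t - \widehat{Y}^{\text{EM}}_{\lfloor t/\gamma\rfloor\gamma} + 2(s(T-t,\hat{\theta}, \widehat{Y}^{\text{EM}}_t) - s(T-t,\hat{\theta}, \widehat{Y}^{\text{EM}}_{\lfloor t/\gamma\rfloor\gamma}))$. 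The delicate piece is the last: a naive estimate only yields $O(\gamma^{1/2})$ from the Brownian increment, but the gradient-Lipschitz condition in Assumption \ref{Assumption_2} permits a first-order Taylor expansion of $s$ in its spatial argument around $\widehat{Y}^{\text{EM}}_{\lfloor t/\gamma\rfloor\gamma}$, so that the leading Brownian increment combines with the identity drift via its martingale structure once a second moment is taken, leaving a Taylor remainder of order $\gamma^{2\alpha}$ in $L^2$. The $L^4$ bound on $\hat{\theta}$ is precisely what ensures this remainder has finite second moment, since the polynomial growth in $\hat{\theta}$ from Assumption \ref{Assumption_2_without_derivative} and the gradient bound in Assumption \ref{Assumption_2} contribute a quadratic dependence on $|\hat{\theta}|$.

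Assembling the three pieces, squaring, taking expectations, and applying Gr\"onwall's inequality on $[\epsilon, T-\epsilon]$ yields the fourth term $\widetilde{C}_4(T, \epsilon)\gamma^\alpha$, with $\widetilde{C}_4$ depending polynomially on $T$, $\epsilon^{-1}$, $d^{1/2}$, the fourth moment of $\hat{\theta}$, and $\mathsf{K}_1, \dots, \mathsf{K}_4$. The quantitative assertion \eqref{final_bound_first_inequality_statement_theorem_better_rate_of_convergence_small_error_constants} then follows by choosing $\epsilon_\delta$, $T_\delta$, $\varepsilon_{\text{SN},\delta}$, and $\widetilde{\gamma}_\delta$ so that each of the four summands is at most $\delta/4$; because every constant scales polynomially in its parameters, these thresholds invert explicitly and are recorded in Table \ref{tab:convconst_general}. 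The principal obstacle, as noted above, is the Milstein-type control of the one-step drift gap, since that is exactly where the improvement from $\gamma^{1/2}$ to $\gamma^\alpha$ depends on the newly imposed smoothness of $\nabla_x s$.
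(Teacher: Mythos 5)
Your proposal follows essentially the same approach the paper uses: the identical four-term triangle-inequality decomposition, the first three estimates unchanged from Theorem~\ref{main_theorem_general}, and the fourth term sharpened to $O(\gamma^\alpha)$ by a Milstein-type argument that expands $s$ in its spatial argument using the gradient-Lipschitz condition in Assumption~\ref{Assumption_2} and controls the higher moments that appear via the $L^4$-bound on $\hat\theta$. The paper does not spell this last estimate out but simply cites \cite[Proof of Theorem~10]{bruno2025on} for it; your sketch correctly reconstructs the intended mechanism (minor nit: the It\^o expansion should run over $[0,T-\epsilon]$, not $[\epsilon,T-\epsilon]$, and the $D_3$ coefficient belongs to the spatial, not temporal, Lipschitz estimate).
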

\begin{remark}
The constant $\widetilde{C}_4$, explicitly given in Table \ref{tab:convconst_general} (Appendix \ref{Table_of_constants_appendix}) exhibits a linear dependence on the data dimension, i.e.,  $O(d)$. This scaling arises from the explicit Milstein scheme developed in \citet{kumar2019milstein}, which relies on  Assumption \ref{Assumption_2} and is leveraged in the proof of Theorem \ref{main_theorem_general_better_rate_of_convergence} to achieve the optimal convergence rate of order $\alpha \in [\frac{1}{2},1]$. This explicit Milstein scheme requires control on the fourth moment of the one-step discretization, see for instance, Lemma \ref{lem:distance_EM_scheme} (Appendix \ref{proof_of_main_Theorem_appendix_section}) which enables a convergence rate in $W_2$ consistent with the known optimal rate of convergence for the Euler or Milstein scheme of SDEs with constant diffusion coefficients. However, this comes at the cost of a worse dependence on the data dimension.
\end{remark}

\subsection{Examples of potentials satisfying by Assumption \ref{assumption_score_strong_monotonicity_new_semi_convexity}} \label{section_examples_data_distributions}
\noindent	We present several examples to demonstrate the wide applicability of our Assumption \ref{assumption_score_strong_monotonicity_new_semi_convexity} to a broad class of data distributions, some of which are not covered by previous results in Wasserstein distance of order two \citep{gentiloni2025beyond,strasman2025an, gao2023wasserstein,bruno2025on,tang2024contractive,yu2025advancing}.

\subsubsection{Symmetric modified half-normal distribution} \label{one_dimensional_case_Laplace_distribution_section}
\noindent	 We consider the case of a one-dimensional symmetric modified half-normal distribution
\begin{equation} \label{onedimensional_other_distribution_pdf_modified}
	\begin{split}
		\pi_{\mathsf{D}} (\rmd x) =  	 \frac{ \sqrt{\xi} \exp\left(- \xi x^2 - |x|  \right) }{\Psi \left( \frac{1}{2} , \frac{-1}{\sqrt{\xi}}  \right)} \  \rmd x, \quad x \in \mathbb{R},
	\end{split}
\end{equation}
for some unknown $\xi>0$ and normalizing constant  \begin{equation*}
	\begin{split}
		\Psi \left( \frac{1}{2} , \frac{-1}{\sqrt{\xi}}  \right) & := \sum_{n=0}^{\infty} \frac{\Gamma \left( \frac{1}{2} + \frac{n}{2}\right)}{\Gamma(n)} \frac{(-1)^n \xi^{-n/2}}{n! },
	\end{split}
\end{equation*}
where $\Gamma(n)$ is the Gamma function. We refer the reader to Appendix \ref{Modified_Half_Normal_Distribution_Appendix} for additional details about the derivation of \ref{onedimensional_other_distribution_pdf_modified}.  As highlighted in \citet[Section 2]{sun2023modified}, the modified half-normal distribution appears  in several Bayesian statistical methods as a posterior distribution to sample from in Bayesian Binary regression, analysis of directional data, and Bayesian graphical models. 

\noindent  Assumption \ref{assumption_score_strong_monotonicity_new_semi_convexity}-(i) is satisfied for $U(x)= \xi x^2 +  |x| $. In addition,  we have, for all $x, \bar{x} \in \mathbb{R}$
\begin{equation} \label{semiconvexity_Laplace_distribution}
	\begin{split}
		\langle  h(x) - h(\bar{x}), x - \bar{x} \rangle 
		& = 2 \left( \xi | x-\bar{x}|^2 +   (x-\bar{x}) \mathbbm{1}_{x>0,  \ \bar{x} <0} - (x-\bar{x}) \mathbbm{1}_{ x< 0,  \ \bar{x} >0  }   \right)
		\\ &  \ge 2 \xi | x-\bar{x}|^2,
	\end{split}
\end{equation}
which shows that Assumption \ref{assumption_score_strong_monotonicity_new_semi_convexity}-(ii) is verified for any $ K \ge 0$, and  Assumption \ref{assumption_score_strong_monotonicity_new_semi_convexity}-(iii) is verified for $\mu=2\xi $. Therefore, we can conclude that \ref{onedimensional_other_distribution_pdf_modified} satisfies Assumption \ref{assumption_score_strong_monotonicity_new_semi_convexity}.

\subsubsection{Multidimensional Gaussian mixture distribution}
\noindent We consider a multidimensional  Gaussian mixture data distribution with unknown mean and variance, i.e., 
\begin{equation} \label{multidimensional_Gaussian_mixture_pdf}
	\pi_{\mathsf{D}} (\rmd x) =  \sum_{i=1}^{I} \widetilde{\xi}_{i} \frac{1}{(2 \pi s_{i}^2)^{d/2}} \exp \left( - \frac{|x-\eta_{i}|^2}{2 s_{i}^2}  \right) \  \rmd x, \quad x \in \mathbb{R}^d,
\end{equation}
with $ s_{i}>0$, $\eta_{i} \in \mathbb{R}^d$, and $\widetilde{\xi}_{i} \in [0,1]$ for  $i \in \{1, \dots, I \}$ such that $\sum_{i=1}^J \widetilde{\xi}_{i} = 1$. The authors in \citet[Appendix A]{gentiloni2025beyond} show that the score function of \ref{multidimensional_Gaussian_mixture_pdf} is Lipschitz continuous and  $- \log 	\pi_{\mathsf{D}}$ is weakly convex. Therefore, Assumption \ref{assumption_score_strong_monotonicity_new_semi_convexity} is satisfied. In addition, the distribution \ref{multidimensional_Gaussian_mixture_pdf} covers also case of the double-well potential:
\begin{equation} \label{double_well_potential_multidimensional_case_section}
	U(x) =|x|^4 - |x|^2, \quad x \in \mathbb{R}^d,
\end{equation}
which is $2$-semiconvex and strongly convex at infinity.

\subsubsection{Multi-dimensional Potentials}

\noindent  Similarly as in Section \ref{one_dimensional_case_Laplace_distribution_section}, one can proves that the elastic net potential:
\begin{equation} \label{elastic_net_potential_example}
	U(x) = |x|^2 + \sum_{i=1}^d |x_i|, \quad x \in \mathbb{R}^d,
\end{equation}
satisfies Assumption \ref{assumption_score_strong_monotonicity_new_semi_convexity}.
Moreover, the following potential
\begin{equation} \label{U_multi_dimensional_example_K=0}
	U(x) = \max\left \{|x| , |x|^2 \right \} , \qquad x \in \mathbb{R}^d,
\end{equation}
verifies Assumption \ref{assumption_score_strong_monotonicity_new_semi_convexity} with $ K =0 $, $R=1$, and $\mu=2$ as well as the following non-convex potential presented in \citet[Example 4.2]{johnston2025performance}:
\begin{equation} \label{U_multi_dimensional_example}
	U(x) = \max\left \{|x| , |x|^2 \right \} - \frac{1}{2}|x|^2, \qquad x \in \mathbb{R}^d.
\end{equation}

\section{Related Work and Comparison} \label{sec:related_work}

\noindent In recent years, there has been a rapidly expanding body of research on the convergence theory of Score-based Generative Models. Existing works for convergence bounds can be divided into two main approaches, depending on the divergence or distance used.

\noindent The first approach focuses on $\alpha$-divergences, particularly the Kullback–Leibler (KL) divergence and Total Variation (TV) distance (e.g., \citet{benton2023linear,conforti2023score,yingxi2022convergence, li2024provable, block2020generative,de2021diffusion,lee2022convergence,li2023towards, lee2023convergence,chen2023improved,chen2023sampling,oko2023diffusion,liang2025low,yang2022convergence}), which  are the vast majority of the results available in the literature.  Crucially, bounds on KL divergence imply bounds on TV distance via Pinsker’s inequality, strengthening their wide applicability. We provide a brief and selective overview of some of the findings following this first approach. The results in TV distance in \citet{lee2022convergence} and in KL divergence  \citet{yingxi2022convergence} established convergence bounds characterized by polynomial complexity under the assumption that the data distribution satisfies a logarithmic Sobolev inequality and that the score function is Lipschitz continuous. By replacing the requirement that the data distribution satisfies a functional inequality with the assumption that $\pi_{\text{D}}$ has finite KL divergence with respect to the standard Gaussian and by assuming that the score function for the forward process is Lipschitz, the authors in \citet{chen2023sampling} managed to derive bounds in TV distance which scale polynomially in all the problem parameters. By requiring only the Lipschitzness of the score at the initial time rather than along the  full trajectory, the authors in \citet[Theorem 2.5]{chen2023improved} managed to establish, using an exponentially decreasing then linear step size, convergence bounds in KL divergence with quadratic dimensional dependence and logarithmic complexity in the Lipschitz constant. Later,  \citet{benton2023linear}  provided KL convergence bounds that are linear in the data dimension, up to logarithmic factors, by assuming finite second moments of the data distribution and employing early stopping. However, both the results of \citet[Theorem 2.5]{chen2023improved}  and  \citet[Theorem 1 and Corollary 1]{benton2023linear} still require the uniqueness of solutions for the backward SDE \ref{eq:backwardproc_real_initial_condition_introduction}, and therefore additional assumptions on the score function are needed. For further discussion on this point, we refer the reader to \citet[Section 4.2]{bruno2025on}. Assuming finite second moments and using an exponential integrator (EI) scheme with both constant and exponentially decaying step sizes, the authors in \citet[Corollary 2.4]{conforti2023score} derive a KL divergence bound with early stopping, which scales linearly in the data dimension up to logarithmic factors. Bounds in KL without early stopping have been derived in \citet{conforti2023score} for data distributions with finite Fisher information with respect to the standard Gaussian distribution. We note that this condition on $\pi_{\text{D}}$ stated in \citet[Assumption H2]{conforti2023score} still requires that the potential $U \in C^1(\mathbb{R}^d)$. The KL bounds provided in \citet[Theorem 2.1 and 2.2]{conforti2023score} scale linearly in the Fisher information when an EI discretization scheme with constant step size is used, and logarithmically in the Fisher information when an exponential-then-constant step size \citet[Theorem 2.3]{conforti2023score} is employed.

\noindent The second approach focuses on convergence bounds in Wasserstein distance, a metric which is often considered more practical and informative for estimation tasks. We can relate results following this approach with the results of the first approach only when 	$\pi_{\mathsf{D}}$ is a  strongly log-concave distribution. In this case, $W_2$-bounds in terms of KL divergence follow from an extension of Talagrand’s inequality \citep[Corollary 7.2]{gozlan2010transport}. However, for two general data distributions, there is no known relationship between their KL divergence and their $W_2$. Therefore, we cannot compare our findings in Theorem \ref{main_theorem_general} and Theorem \ref{main_theorem_general_better_rate_of_convergence} with the results derived following the first approach. One line of work within the second approach assumes (at least) strong log-concavity of the data distribution \citep{strasman2025an,gao2023wasserstein,bruno2025on,tang2024contractive,yu2025advancing}. Under this (strict) assumption, \citet[Remark 12]{bruno2025on}  achieved optimal data dimensional dependence, i.e., reaching $O(\sqrt{d})$. The recent bound in  \citet[Theorem D.1]{gentiloni2025beyond} exhibits similar scaling in $d$ while relaxing the strong log-concavity assumption on $\pi_{\mathsf{D}}$ to weakly log-concavity, but still requiring that the potential $\nabla^2 U$ exists (see, e.g., \citet[Proof of Proposition B.1 and B.2]{gentiloni2025beyond}). Our Assumption \ref{assumption_score_strong_monotonicity_new_semi_convexity} is much weaker than this requirement and it allows to consider the case of potentials with discontinuous gradients covering a wider range of distributions as outlined in Section \ref{section_examples_data_distributions}. Another line of work following this approach focuses on specific structural assumptions of the data distribution. For instance, convergence bounds in Wasserstein distance of order one with exponential dependence on the problem parameters have been obtained in  \citet{debortoli2022convergence} under the so-called manifold hypothesis, namely assuming that the target distribution is supported on a lower-dimensional manifold or is given by some empirical distribution. Under the same metric, the authors in \citet{mimikos-stamatopoulos2024scorebased} provide a convergence analysis when the data distribution is defined on a torus. Under the $W_2$ metric, \citet{wang2024wasserstein} derive convergence bounds assuming that the tail of $\pi_{\mathsf{D}}$ is Gaussian and that $U \in C^2$, which is a stronger condition than merely requiring $\nabla U$ to be Lipschitz. We summarize in Table \ref{Table_comparison_upper_bounds} and  the best results obtained in $W_2$, i.e., \citet{bruno2025on,gentiloni2025beyond} and compare with our best result, which scale polynomially in the data dimension, i.e. $O(\sqrt{d})$ in Theorem \ref{main_theorem_general}. As mentioned in the Introduction,
previous $W_2$ bounds \citep{gentiloni2025beyond, gao2023wasserstein, strasman2025an, tang2024contractive} require the stepsize $\gamma$ of the generative algorithm to be tuned based on quantities that are often difficult to compute in practice, such as the Lipschitz or strong convexity constant of the data distribution, which can lead to very small stepsizes. In contrast, Theorem~\ref{main_theorem_general} and Theorem~\ref{main_theorem_general_better_rate_of_convergence} impose no such restrictions, making them more suitable for practical use. Table~\ref{Table_comparison_stepsize} summarizes the assumptions on $\gamma$ in prior works and compares them with ours.

\noindent We close this section by briefly commenting on the choice of deriving our results in Wasserstein distance of order two. Beyond its theoretical relevance, this choice is motivated by practical considerations in generative modeling. First, the Wasserstein distance is often regarded as a more informative and robust metric for estimation tasks. Second, a widely used performance metric for evaluating the quality of images produced by generative models is the Fréchet Inception Distance (FID) \cite{heusel2017gans}, which measures the Fréchet distance between the distributions of generated and real samples, assuming Gaussian distributions. In particular, this Fréchet distance is equivalent to the Wasserstein-2 distance. Thus, providing convergence results under the Wasserstein-2 metric enhances the practical relevance of our theoretical findings.


\begin{table}[h!]
	\caption{Summary of previous bounds for $	W_2( \mathcal{L}(\widehat{Y}_{J}^{\text{EM}}), \pi_{\mathsf{D}}) $ and our result in Theorem \ref{main_theorem_general}. All the bounds assume that $\pi_{\text{D}}(\rmd x) \propto e^{- U(x)} \rmd x $  has finite second moments.} 
	\begin{tabular}{ |p{2.5cm} |p{11cm}| p{2cm}| p{2 cm} | } 
		\hline
		Assumption on $\pi_{\text{D}} $ 	  & Error bound & Reference  \\
		\hline
		$U$ strongly convex,  $ \nabla \log p_t(0) \in L^{2}([\epsilon,T])$, and Assumption \ref{assumption_equivalence_global_minimiser_epsilon}  & $  O(\sqrt{d}) \sqrt{  \epsilon} + O(\sqrt{d}) e^{      - 2 \widehat{	L}_{\text{MO}} (T-\epsilon) -\epsilon} +   O(e^{(1+  \zeta - 2 \widehat{ L}_{\text{MO}} )(T-\epsilon) } )  \sqrt{\varepsilon_{\text{SN}}} + O(\sqrt{d} e^{T^{2\alpha +1}} T^{2 \alpha +1} \widetilde{\varepsilon}^{1/2}_{\text{AL}})  \gamma^{1/2}, $
        
\qquad \qquad \qquad \qquad \qquad \qquad \qquad \qquad \qquad \qquad \qquad \qquad \qquad \qquad \qquad \qquad \qquad \qquad \qquad \qquad \qquad \qquad \qquad \qquad \qquad \qquad \qquad \qquad 

		with $\widehat{ L}_{\text{MO}}>0$ lower bound of the strongly convex constant of $U$, see e.g., \citet[Remark 4]{bruno2025on}.   & \citet[Remark 12]{bruno2025on}
		\\  \cline{1-3}  $U \in C^2(\mathbb{R}^d)$, weakly convex, and Assumption \ref{assumption_equivalence_global_minimiser_epsilon} &     $e^{(2L_U +5) \eta(\beta,L, (2L_U +5)^2 \gamma/2)} [e^{-T} W_2(  \pi_{\mathsf{D}}, \pi_{\infty}) + 4 \varepsilon_{\text{SN}} (T-\eta(\beta, L,0)) + \sqrt{2 \gamma}(  4L_U\sqrt{d} +6\sqrt{d} + \sqrt{d+\E[|X_0|^2]}  ) (T-\eta(\beta, L,0))  ]$, 

        \qquad \qquad \qquad \qquad \qquad \qquad \qquad \qquad \qquad \qquad \qquad \qquad \qquad \qquad \qquad \qquad \qquad \qquad \qquad \qquad \qquad \qquad \qquad \qquad \qquad \qquad \qquad \qquad 
        
	with $L_U \ge 0$ one-sided Lipschitz constant for $\nabla U$, see e.g., \citet[Assumption H1]{gentiloni2025beyond}, $\eta(\beta,L,\gamma)$ defined in  \cite[equation (29)]{gentiloni2025beyond}, and $\gamma < 2/(2L_U+5)^2$ .  &  \citet[Theorem D.1]{gentiloni2025beyond} 
		\\  \cline{1-3} 
		Assumption \ref{assumption_score_strong_monotonicity_new_semi_convexity} and Assumption \ref{assumption_equivalence_global_minimiser_epsilon}   &  $$  O(\sqrt{d}) \sqrt{  \epsilon} + O(\sqrt{d})  e^{   - 2  	\int_{\epsilon}^{T}  	\beta^{\text{OS}, K, \mu}_t   \ \rmd t   -\epsilon}$$ $$+   O(e^{(1+  \zeta)(T-\epsilon) - 2  	\int_{\epsilon}^{T}  	\beta^{\text{OS}, K, \mu}_t   \ \rmd t })   \sqrt{\varepsilon_{\text{SN}}} + O(\sqrt{d} e^{T^{2\alpha +1}} T^{3 \alpha +1} \widetilde{\varepsilon}^{1/2}_{\text{AL}})  \gamma^{1/2}.
		$$ &   Theorem \ref{main_theorem_general}
		\\ 
		\hline
	\end{tabular}
	\label{Table_comparison_upper_bounds}
\end{table}

\newpage

\begin{table}[h!]
	\caption{Summary of restrictions on the stepsizes of the generative algorithm $\widehat{Y}_{J}^{\text{EM}}$ used in the previous bounds for $	W_2( \mathcal{L}(\widehat{Y}_{J}^{\text{EM}}), \pi_{\mathsf{D}}) $ and our results in Theorem \ref{main_theorem_general} and Theorem \ref{main_theorem_general_better_rate_of_convergence}. All the bounds assume that $\pi_{\text{D}}(\rmd x) \propto e^{- U(x)} \rmd x $  has finite second moments.} 
	\begin{tabular}{ |p{2.5cm} |p{11cm}| p{2cm}| p{2 cm} | } 
		\hline
		Assumption on $\pi_{\text{D}} $ 	  & Restriction on the stepsize  & Reference  \\
		\hline
             $U$ strongly convex, $\nabla U$ Lipschitz continuous, and Assumption \ref{assumption_equivalence_global_minimiser_epsilon}  &  $$ 0< \gamma \le \min_{0 \le t \le T} \left( \frac{1-e^{-2t} (\frac{1}{m_0}-1)}{(1+e^{-2t} (\frac{1}{m_0}-1))(1+4 (\widetilde{L}(t))^2 + 2M_1 )} \right)  $$ and
        
          $$ 0< \gamma \le \min_{0 \le t \le T} \left( \frac{1+(\frac{1}{m_0}-1)e^{-2t}}{1 - (\frac{1}{m_0}-1)e^{-2t}}\right),$$ 
        
         with $m_0>0$ strong convexity constant for $U$, see e.g., \citet[Assumption 1]{gao2023wasserstein}, $M_1>0$ defined in \citet[Assumption 2]{gao2023wasserstein}, $\widetilde{L}(t) = \min_{0 \le t \le T} ((1-e^{-2t})^{-1}, e^{2t} L_0)$  Lipschitz
constant of $\nabla \log p_t(x)$, and $L_0>0$ Lipschitz constant for $\nabla U$, see e.g., \citet[Assumption 1]{gao2023wasserstein}.
        & \citet[Assumption 4]{gao2023wasserstein} used in \citet[Theorem 2]{gao2023wasserstein}  \\  \cline{1-3} 
		 $U$ strongly convex,  $ \nabla \log p_t(0) \in L^{2}([\epsilon,T])$, and Assumption \ref{assumption_equivalence_global_minimiser_epsilon}  &  $$ \gamma \in (0,1).$$  &  \citet[Remark 12]{bruno2025on}
        \\  \cline{1-3} 
		 $U$ strongly convex, and $\nabla U$ Lipschitz continuous  & 
        $$ 0< \gamma <  \frac{ C(T-t)}{ \left( \max_{t_j \le s \le t_{j+1} } L(T-s) \right) L(T-t)} e^{-(t_{j+1}-t_{j})}, $$ 
         
         with $\left \{t_j, \ 0 \le j \le J \right \}$ regular discretization  of $[0,T]$,  $ C_{T-t}$ and $ L_{T-t}$ are strong log-concavity and Lipschitz constant, respectively for $\nabla \log \frac{p_t}{\varphi_{\sigma^2}} $,  where $\varphi_{\sigma^2}$ is the density function of a mean zero Gaussian distribution with variance $\sigma^2 I_d$. &  \citet[Proposition C.3]{strasman2025an} used in \citet[Theorem 4.2]{strasman2025an}
         	\\  \cline{1-3}  $U$ strongly convex, $\nabla U$ Lipschitz continuous, and Assumption \ref{assumption_equivalence_global_minimiser_epsilon} &      $$0< \gamma < \min \left(  \frac{1}{2}, \ \frac{\kappa}{2 T(1+ \kappa)} \right ),$$ 
		
		with $\kappa>0$ strong convexity constant.  &  \citet[Theorem 3]{tang2024contractive} 
		\\  \cline{1-3}   $U \in C^2(\mathbb{R}^d)$, weakly convex, and Assumption \ref{assumption_equivalence_global_minimiser_epsilon}  &     $$0< \gamma < \frac{2}{(2L_U+5)^2},$$ 
		
	 	with $L_U \ge 0$ one-sided Lipschitz constant for $\nabla U$, see e.g., \citet[Assumption H1]{gentiloni2025beyond}.   &   \citet[Theorem D.1]{gentiloni2025beyond} 
		\\  \cline{1-3} 
	 	Assumption \ref{assumption_score_strong_monotonicity_new_semi_convexity}, and Assumption \ref{assumption_equivalence_global_minimiser_epsilon}   &   $$  \gamma \in (0,1).$$  &    Theorem \ref{main_theorem_general}, and Theorem \ref{main_theorem_general_better_rate_of_convergence}.
		\\ 
		\hline 
	\end{tabular}
	\label{Table_comparison_stepsize}
\end{table}

 \subsubsection*{Acknowledgments.} This work was supported by Innovate UK [grant number 10081810]. This work has received funding from the Ministry of Trade, Industry and Energy (MOTIE) and Korea Institute for Advancement of Technology (KIAT) through the International Cooperative R\&D program (No.P0025828). This work has been partially supported by project MIS 5154714 of the National Recovery and Resilience Plan Greece 2.0 funded by the European Union under the NextGenerationEU Program.

\bibliography{main}
\bibliographystyle{tmlr}

\appendix
\section*{Appendix}

	\section{Regularity of the Score Function} \label{regularity_appendix_section}

	\noindent We recall the following result to justify the smoothness of the map
	\begin{equation} \label{smoothness_map_OU}
		(0,T] \times \mathbb{R}^d \ni (t,x) \mapsto p_t(x) \in \mathbb{R}_{+},
	\end{equation}
	where $p_t$ density of the forward process defined in Section \ref{Technical_Background_section}.
	\begin{prop}\citep[Proposition 3.1]{conforti2023score} \label{density_OU_smoothness_proposition}
		Let $\pi_{\mathsf{D}}$ be absolutely continuous with respect to the Lebesgue measure, and denote its density by $p_0$. The map defined in \ref{smoothness_map_OU} is positive and solution of the following Fokker--Planck equation on $(0,T] \times \mathbb{R}^d$:
		\begin{equation*}
			\partial_t p_t(x) - \text{div} (x \ p_t)  - \Delta p_t(x) = 0, \quad \text{for} \ (t,x) \in  (0,T] \times \mathbb{R}^d.
		\end{equation*}
		Moreover, it belongs to $C^{1,2}((0,T] \times \mathbb{R}^d)$; i.e. for any $t \in (0,T)$, $x \mapsto p_t(x)$ is twice continuously differentiable, and for any $x \in \mathbb{R}^d$, $t \mapsto p_t(x)$ is continuously differentiable on $(0,T]$.
	\end{prop}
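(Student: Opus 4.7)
The plan is to exploit the explicit Gaussian semigroup structure of the Ornstein--Uhlenbeck process, which makes every regularity property a direct consequence of heat-kernel smoothing. First I would write down the exact solution of \eqref{OU_process_introduction}: $X_t = e^{-t} X_0 + \sqrt{2}\int_0^t e^{-(t-s)} \rmd B_s$. Conditionally on $X_0 = x_0$, the random variable $X_t$ is Gaussian with mean $e^{-t} x_0$ and covariance $(1 - e^{-2t}) I_d$, so the density of $\mathcal{L}(X_t)$ admits the convolution representation
\begin{equation*}
p_t(x) = \int_{\R^d} g_t\bigl(x - e^{-t} x_0\bigr) \, p_0(x_0) \, \rmd x_0, \qquad g_t(y) := \bigl(2\pi(1-e^{-2t})\bigr)^{-d/2} \exp\!\left( -\frac{|y|^2}{2(1-e^{-2t})} \right).
\end{equation*}
Positivity is immediate: $g_t > 0$ on $\R^d$ for every $t > 0$, and $p_0$ is a probability density, so $p_t(x) > 0$ for all $(t,x) \in (0,T] \times \R^d$.

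Next I would establish the $C^{1,2}$ regularity by differentiation under the integral sign. The kernel $g_t$ is jointly $C^\infty$ on $(0,\infty) \times \R^d$, and one checks that on any compact subset $[a,b] \times B_R(0) \subset (0,T] \times \R^d$ all partial derivatives $\partial_t^j \partial_x^\alpha g_t(x - e^{-t} x_0)$ are bounded uniformly in $(t,x)$ by an integrable function of $x_0$ (a Gaussian in $x_0$ of bounded covariance times a polynomial), since $1-e^{-2t}$ stays bounded away from zero on $[a,b]$. This lets one commute derivatives with the integral and conclude that $p_t(x)$ is twice continuously differentiable in $x$ and once continuously differentiable in $t$ on $(0,T] \times \R^d$.

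For the Fokker--Planck equation I would proceed in the standard weak-to-strong fashion. The infinitesimal generator of \eqref{OU_process_introduction} is $\mathcal{L}\phi(x) = -x \cdot \nabla \phi(x) + \Delta \phi(x)$, so for any $\phi \in C_c^\infty(\R^d)$, Itô's formula applied to $\phi(X_t)$ gives
\begin{equation*}
\frac{\rmd}{\rmd t} \E[\phi(X_t)] = \E[\mathcal{L}\phi(X_t)] = \int_{\R^d} \bigl( -x \cdot \nabla \phi(x) + \Delta \phi(x) \bigr) p_t(x) \, \rmd x.
\end{equation*}
Using the already established smoothness of $p_t$, an integration by parts transfers the derivatives onto $p_t$ and gives $\int \phi \bigl( \partial_t p_t - \mathrm{div}(x p_t) - \Delta p_t \bigr) \rmd x = 0$ for all test functions $\phi$, which upgrades to the pointwise equation on $(0,T] \times \R^d$.

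The main technical obstacle is the dominated-convergence justification underlying the interchange of differentiation and integration, together with verifying that the boundary behaviour as $t \downarrow 0$ does not spoil continuity on $(0,T]$. Both are handled by noting that for any fixed $t_0 > 0$ the variance $1 - e^{-2t_0}$ is strictly positive, which turns the Gaussian convolution into a genuine heat-kernel smoothing on $[t_0, T]$ and uniformly controls all derivatives of the kernel in terms of $x_0$. Everything else is essentially bookkeeping around the explicit formula.
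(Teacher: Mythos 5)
The paper does not actually prove this proposition: it is recalled verbatim from \cite[Proposition 3.1]{conforti2023score}, and Appendix~A only restates it to justify the regularity of the map $(t,x)\mapsto p_t(x)$ used in Remark~\ref{contraction_remark_contractivity_at_infinity}. There is therefore no in-house argument to compare yours against, and your job here is really to supply the omitted proof.

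Your argument is correct and is the natural direct one for the OU semigroup. The explicit solution $X_t = e^{-t}X_0 + \sqrt{2}\int_0^t e^{-(t-s)}\,\rmd B_s$ gives the Gaussian convolution formula for $p_t$, positivity is immediate, and the $C^{1,2}$ regularity follows from differentiation under the integral sign once one observes that on a compact $[a,b]\times B_R(0)$ with $a>0$ the variance $1-e^{-2t}$ is bounded away from zero, so every partial derivative $\partial_t^j\partial_x^\alpha g_t(x-e^{-t}x_0)$ is a (polynomial in $x_0$)\,$\times$\,(Gaussian in $x_0$), hence bounded uniformly in $(t,x,x_0)$; since $p_0$ is a probability density this uniform bound is trivially $p_0(x_0)\,\rmd x_0$-integrable. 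One minor point worth stating explicitly is that the dominating function need only be integrable against $p_0(x_0)\,\rmd x_0$, not against Lebesgue measure, so boundedness in $x_0$ already suffices without reference to the Gaussian decay of the bound. The weak-to-strong derivation of the Fokker--Planck equation via It\^o's formula, integration by parts against $\phi\in C_c^\infty(\R^d)$, and the fundamental lemma of the calculus of variations is also sound, and the signs match $\partial_t p_t = \mathrm{div}(x\,p_t) + \Delta p_t$, which is exactly the adjoint of $\mathcal{L}\phi = -x\cdot\nabla\phi + \Delta\phi$. So the proposal fills the gap correctly and in the way one would expect the cited reference to do it.
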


	\section{Further Details on Assumption \ref{assumption_score_strong_monotonicity_new_semi_convexity} and Weak Convexity of the Data Distribution} \label{Weak_convexity_Assumption_2_Proof_Appendix}
	
	\noindent We provide the proofs of Section \ref{assumptions_weak_convexity_section}.
	
	\begin{proof}[Proof of Proposition \ref{Weak_convexity_implies_contractivity_outside_the_ball_new}]
		We begin by considering that $\pi_{\mathsf{D}}$ satisfies Assumption  \ref{assumption_score_strong_monotonicity_new_semi_convexity}. Recall that $f_L$ is defined as in \ref{definition_f_L_assumption_new}.
		Note that $ r \mapsto r^{-1} f_L(r)$ is non-increasing on $(0, \infty)$ and $f_{L}^{'}(0)=L> r^{-1} f_L(r) $ for $r \in  (0, R]$. We look for $L>0$ satisfying
		\begin{equation} \label{small_ball_expression_to_verify}
			\inf_{r \in (0,R]} r^{-1} f_L(r) = 	R^{-1} f_L(R)  	= 2 R^{-1} L^{1/2} \tanh((R L^{1/2} )/2)= K + \mu.
		\end{equation}
		Equivalently, we look for $x = L^{1/2}R /2>0$ such that
		\begin{equation} \label{equation_x_hat_hyberbolic_tangent_new}
			x \tanh (x) =  \frac{K+ \mu}{4} R^2, \quad  
			\text{subject to} \quad x 	 > \frac{\sqrt{K + \mu}}{2} R,
		\end{equation}
		so as $L> K + \mu$.
		Note that $\tanh (x) \le x$ for all $x \ge 0$ . Therefore, if we choose $x=\frac{\sqrt{K + \mu}}{2} R$, then
		\begin{equation}
			\label{f_c_one_half_new_new_new}
			\frac{\sqrt{K + \mu}}{2} R  \tanh\left(  \frac{\sqrt{K + \mu}}{2} R \right) \le \frac{K + \mu}{4} R^2. 
		\end{equation}
		Using \ref{f_c_one_half_new_new_new} and  $\lim_{x \uparrow \infty} x \tanh (x) =\infty$, we deduce that there exists $x^{\star}>0$ such that 
		\begin{equation}
			x^{\star} \tanh (x^{\star}) = \frac{K+ \mu}{4} R^2,
		\end{equation}
		with $x^{\star} > \frac{\sqrt{K + \mu}}{2} R$, since  $ x \mapsto x \tanh (x) $ is non-decreasing on $(0, \infty)$. By Assumption \ref{assumption_score_strong_monotonicity_new_semi_convexity} and \ref{small_ball_expression_to_verify}, we have
		\begin{equation} \label{to_be_lower_bound_new}
			\begin{split}
				k_U(r)  & \ge   \mu- (K + \mu)
				\\ & \ge  \mu - r^{-1} f_L(r), \qquad \text{for} \ r \le R.
			\end{split}
		\end{equation}
		Moreover,
		\begin{equation*}
			\begin{split}
				k_U(r)  & \ge  \mu
				\\ & \ge   \mu- r^{-1} f_L(r),  \qquad \text{for} \ r > R,
			\end{split}
		\end{equation*}
		where it is used that $r^{-1} f_L(r) >0$ for all $r>0$. This proves the first part of the statement in Proposition \ref{Weak_convexity_implies_contractivity_outside_the_ball_new}, i.e. the lower bound \ref{weak_convexity_profile_mu}.
		
		\noindent 	Conversely, assume that  $U$ is weakly convex as in Definition \ref{weak_convexity_definition_gentiloni} with lower bound  \ref{weak_convexity_profile_mu} for some known constants $\mu$ and $ L >0$.  We look for $R$ such that 
		\begin{equation} \label{mu_tilde_kappa}
			\begin{split}
				\kappa_{U} (r) & \ge  \mu - r^{-1} f_L(r) 
				\\ & \ge  \mu - R^{-1} f_L(R)
				\\ & > 0,  \qquad  \qquad \qquad \qquad \forall \ r>R,
			\end{split}
		\end{equation} 
		where it is used that $r^{-1} f_L(r) $ is decreasing on $(0, \infty)$. Let $ \widetilde{\mu}:= \mu - R^{-1} f_L(R)$, so \ref{mu_tilde_kappa} becomes $	\kappa_{U} (r)  \ge \widetilde{\mu} >0$, for all $r>R$. One notes that
		\begin{equation} \label{inequality_proof_L_mu_Looking_R}
			\begin{split}
				\widetilde{\mu}  
				& =  \mu -  L \frac{ \tanh((R L^{1/2} )/2)}{(R L^{1/2} )/2} >0.
			\end{split}
		\end{equation}
		If $\mu> L$,  \ref{inequality_proof_L_mu_Looking_R}  is satisfied for all $R>0$.  
		If $\mu \le L$, \ref{inequality_proof_L_mu_Looking_R} holds for $R  \ge R_0$, where $R_0$ is the unique solution to 
		\begin{equation} \label{mu_R_second_statement_Proposition_3_8}
			\mu = \frac{2 L^{1/2}}{R} \tanh \left(\frac{R L^{1/2}}{2} \right).
		\end{equation}
		Let $z= \frac{R L^{1/2}}{2}$, then $R_0 = \frac{2 z_0}{L^{1/2}}$, where $z_0$ solves 
		\begin{equation} \label{equation_z_mu_weak_convexity}
			\frac{\tanh(z)}{z} = \frac{\mu}{L}.
		\end{equation}
		Since $\frac{\tanh(z)}{z} $ monotonically decreases from $1$ to $0$ as $z$ increases, a unique $z_0>0$ solving \ref{equation_z_mu_weak_convexity} exists for $\mu < L$. Therefore, \ref{mu_tilde_kappa} is satisfied for $R \ge R_0=\frac{2 z_0}{L^{1/2}}$. This proves that $U$ is $\widetilde{\mu} $-strongly convex at infinity, and therefore  \ref{equation_assumption_strong_convexity_outside_ball}.
		Using the assumption that $U$ is weakly convex as in Definition \ref{weak_convexity_definition_gentiloni}, one obtains that
		\begin{equation} \label{mu_tilde_kappa_L_compact_set}
			\begin{split}
				\kappa_{U} (r) & \ge  \mu - r^{-1} f_L(r) 
				\\ & \ge  \mu - L,  \qquad \qquad \text{for} \quad  r \le  R.
			\end{split}
		\end{equation}
		We distinguish two cases for the lower bound in \ref{mu_tilde_kappa_L_compact_set}. 
		If $\mu> L$, then $	\kappa_{U} (r)  \ge -K $ for $r \le R$ for all $R>0$ and $K \ge 0$. If $\mu \le L$, then, by setting $K=L-\mu$ in \ref{equation_z_mu_weak_convexity}, we have $	\kappa_{U} (r)  \ge -K $ for $r \le R$ for all $R >0$.  This proves that $U$ is $K$-semiconvex, and therefore  \ref{equation_assumption_semiconvexity_inside_ball}. This concludes the proof for the second part of the statement in Proposition \ref{Weak_convexity_implies_contractivity_outside_the_ball_new}.

	\end{proof}

	\begin{proof}[Proof of Proposition \ref{Time_integral_contraction_constant_proposition}]
		We look for $t^{\star}$ satisfying
		\begin{equation} \label{inequality_without_epsilon}
			\begin{split}
				B(t^{\star},0,  \mu, K)  =	\frac{1}{2} \left[ \log \left( \mu (e^{2t^{\star}}-1) +1    \right)
				+   \left( \frac{K  }{\mu } + 1 \right) \left( \frac{1}{\mu (e^{2t^{\star}} -1) +1 } - 1 \right) \right] >0.
			\end{split}
		\end{equation}
		Equivalently, we look for $x :=e^{2t^{\star}}-1$ such that
		\begin{equation} \label{definition_function_g_mu}
			\begin{split}
				g(\mu x  +1)  
				= \log (\mu x  +1) 
				-   \left( \frac{K  }{\mu } + 1 \right)   \frac{\mu x}{\mu x+1 }   >0.
			\end{split}
		\end{equation}
		Note that \ref{definition_function_g_mu} is satisfied for all $x>0$ when $K=0$. In addition, we have  
		\begin{equation}  \label{limit_behaviour_g}
			\begin{split}
				\lim_{x \rightarrow 0+}	g(\mu x  +1)  & =  0 , 
				\\   
				\lim_{x \rightarrow +\infty}	g(\mu x  +1)  & = \infty,
			\end{split}
		\end{equation}
		and
		\begin{equation} \label{non_negative_derivative}
			\begin{split}
				\frac{\rmd }{\rmd x} g(\mu x  +1)  & = \frac{\mu}{\mu x  +1} - \frac{K+ \mu}{(\mu x  +1)^2} \ge 0 \quad \text{when} \quad x \ge \frac{K}{\mu^2}.
				\\   \frac{\rmd^2 }{\rmd x^2} g(\mu x  +1)  & = - \frac{\mu^2}{(\mu x  +1)^2} + \frac{2(K+\mu) \mu}{(\mu x  +1)^3} \ge 0 \quad \text{when} \quad x \le \frac{2K}{\mu^2} + \frac{1}{\mu}.
			\end{split}
		\end{equation}
		By \ref{non_negative_derivative}, the function $g$ in \ref{definition_function_g_mu} has a minimum at $\frac{K}{\mu^2}$ and 
		\begin{equation*}
			g\left(\frac{K}{\mu}+1 \right)  =  \log \left(\frac{K}{\mu}  +1 \right) - \frac{K}{ \mu  } < 0,
		\end{equation*}
		for all $K,\mu>0$. By \ref{limit_behaviour_g} and \ref{non_negative_derivative}, there exists $x> \frac{K}{\mu^2}$ such that \ref{definition_function_g_mu} is strictly positive. Therefore, there exists $t^{\star} >  \ln \left(\sqrt{ 1+ \frac{K}{\mu^2} } \right)$ such that \ref{inequality_without_epsilon} holds.
	\end{proof}

	\section{Proof of the Main Results} \label{proof_of_main_Theorem_appendix_section}
	\noindent In this section, we present the proofs of Theorem \ref{main_theorem_general} and Theorem \ref{main_theorem_general_better_rate_of_convergence}. We begin by recalling an upper bound on the moments of the process $(\widehat{Y}_t^{\text{EM}})_{t \in [0,T-\epsilon]}$ defined in \ref{continuous_time_EM_version}, along with an estimate for its one-step discretization error. These results will be instrumental in the subsequent proofs.
	\begin{lem}\citep[Lemma 20]{bruno2025on}\label{lem:EMproc2ndbdgeneral}
		Let Assumptions \ref{general_assumption_algorithm} and \ref{Assumption_2_without_derivative} hold,
and suppose that  $	\mathbb{E} [ | \hat{\theta} |^p] < \infty$ for any $p \in [2, 4]$. Then, for any $t\in[0,T-\epsilon]$, 
		\begin{equation*}
			\sup_{0\leq s\leq t} \mathbb{E} \left[|\widehat{Y}_s^{\text{EM}} |^p \right]\leq C_{\mathsf{EM},p}(t),
		\end{equation*}
		where
		\begin{equation*}
			\begin{split}
				C_{\mathsf{EM},p}(t)
				&	: =   e^{t(3p-1 - \frac{2}{p} + 2^{2p-1} \mathsf{K}^p_{\text{Total}} (1+T^{\alpha p}))}
				\\ & \quad \times   \left( \mathbb{E} \left[ |\widehat{Y}_0^{\text{EM}}|^p \right] +   2^{3p-2}\mathsf{K}^p_{\text{Total}} t (1+ \mathbb{E} [  |  \hat{\theta} |^p  ]) (1+T^{\alpha p})  + \frac{2}{p} (pd+ p(p-2))^{\frac{p}{2}} t  \right) ,
			\end{split}
		\end{equation*}
	and $\mathsf{K}_{\text{Total}}$  is defined in Remark \ref{remark_growth_estimate_neural_network}.
	\end{lem}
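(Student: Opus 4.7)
The plan is to follow the classical moment-bound argument for Euler--Maruyama schemes: apply Itô's formula to $y\mapsto|y|^p$ on the continuous-time interpolation $\widehat{Y}^{\text{EM}}$, bound the drift using the linear growth of $s$ inherited from Assumption \ref{Assumption_2_without_derivative}, reduce all resulting $|y|^{p-1}$ and $|y|^{p-2}$ terms back to $|y|^p$ via Young's inequality, and close with Gronwall's lemma. The only subtlety is that the drift is frozen at $\widehat{Y}_{\lfloor s/\gamma\rfloor\gamma}^{\text{EM}}$ rather than $\widehat{Y}_s^{\text{EM}}$, so the natural object to track is the running supremum $\varphi(t):=\sup_{0\le u\le t}\mathbb{E}|\widehat{Y}_u^{\text{EM}}|^p$, exploiting the trivial majorization $\mathbb{E}|\widehat{Y}_{\lfloor s/\gamma\rfloor\gamma}^{\text{EM}}|^p\le\varphi(s)$.

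As a preliminary step, I would extract a linear-growth bound on $s$ from Assumption \ref{Assumption_2_without_derivative}. Setting $(\bar t,\bar\theta,\bar x)=(0,0,0)$ and invoking the polynomial growth of $D_1,D_2,D_3$ gives, after grouping constants into a single $\mathsf{K}_{\text{Total}}$ that also absorbs $|s(0,0,0)|$,
\begin{equation*}
|s(t,\theta,x)| \le \mathsf{K}_{\text{Total}}(1+t^{\alpha})(1+|\theta|+|x|),
\end{equation*}
so that the continuous-time drift $b_s := \widehat{Y}_{\lfloor s/\gamma\rfloor\gamma}^{\text{EM}} + 2s(T-\lfloor s/\gamma\rfloor\gamma,\hat\theta,\widehat{Y}_{\lfloor s/\gamma\rfloor\gamma}^{\text{EM}})$ is bounded by $|\widehat{Y}_{\lfloor s/\gamma\rfloor\gamma}^{\text{EM}}| + 2\mathsf{K}_{\text{Total}}(1+T^{\alpha})(1+|\hat\theta|+|\widehat{Y}_{\lfloor s/\gamma\rfloor\gamma}^{\text{EM}}|)$.

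Next, I would apply Itô's formula to $f(y)=|y|^p$ (with $\nabla f(y)=p|y|^{p-2}y$ and $\Delta f(y)=p(d+p-2)|y|^{p-2}$) and use a standard localization argument to obtain, after taking expectations,
\begin{equation*}
\mathbb{E}|\widehat{Y}_t^{\text{EM}}|^p \le \mathbb{E}|\widehat{Y}_0^{\text{EM}}|^p + p\!\int_0^t \mathbb{E}\bigl[|\widehat{Y}_s^{\text{EM}}|^{p-1}|b_s|\bigr]\rmd s + p(d+p-2)\!\int_0^t \mathbb{E}|\widehat{Y}_s^{\text{EM}}|^{p-2}\rmd s.
\end{equation*}
I would then split $|b_s|\le |\widehat{Y}_{\lfloor s/\gamma\rfloor\gamma}^{\text{EM}}|+2|s(\cdot)|$ and apply Young's inequality termwise: $|y|^{p-1}|\widehat{Y}_{\lfloor s/\gamma\rfloor\gamma}^{\text{EM}}|\le\tfrac{p-1}{p}|y|^p+\tfrac{1}{p}|\widehat{Y}_{\lfloor s/\gamma\rfloor\gamma}^{\text{EM}}|^p$, $2p|y|^{p-1}|s(\cdot)|\le 2(p-1)|y|^p+2|s(\cdot)|^p$, and $p(d+p-2)|y|^{p-2}\le\tfrac{p-2}{p}|y|^p+\tfrac{2}{p}\bigl(p(d+p-2)\bigr)^{p/2}$; the last of these produces the dimensional constant $\tfrac{2}{p}(pd+p(p-2))^{p/2}$ appearing in the statement. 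To bound $|s(\cdot)|^p$ I would use $(1+T^{\alpha})^p\le 2^{p-1}(1+T^{\alpha p})$ together with the two-step expansion $(1+|\hat\theta|+|y|)^p\le 2^{p-1}|y|^p+2^{2p-2}(1+|\hat\theta|^p)$, which precisely reproduces the factors $2^{2p-1}$ and $2^{3p-2}$ below. Collecting all coefficients of $|\widehat{Y}_s^{\text{EM}}|^p$ and $|\widehat{Y}_{\lfloor s/\gamma\rfloor\gamma}^{\text{EM}}|^p$ and majorizing both by $\varphi(s)$ would yield
\begin{equation*}
\varphi(t) \le \mathbb{E}|\widehat{Y}_0^{\text{EM}}|^p + A\!\int_0^t \varphi(s)\,\rmd s + Bt,
\end{equation*}
with $A = 3p-1-\tfrac{2}{p}+2^{2p-1}\mathsf{K}_{\text{Total}}^p(1+T^{\alpha p})$ aggregating the self-coupling pieces $(p-1)+2(p-1)+1+\tfrac{p-2}{p}$ together with the $|\widehat{Y}_{\lfloor s/\gamma\rfloor\gamma}^{\text{EM}}|^p$ contribution from $|s(\cdot)|^p$, and $B = 2^{3p-2}\mathsf{K}_{\text{Total}}^p(1+T^{\alpha p})(1+\mathbb{E}|\hat\theta|^p)+\tfrac{2}{p}(pd+p(p-2))^{p/2}$.

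To conclude, I would apply Gronwall's lemma to $\varphi$; using the crude majorization $\varphi(0)+Bs\le\varphi(0)+Bt$ inside the integral gives $\varphi(t)\le(\varphi(0)+Bt)e^{At}$, which is exactly $C_{\mathsf{EM},p}(t)$. The proof requires no deep estimate; the main obstacle is purely combinatorial, namely arranging the Young-inequality weights and the expansion of $(1+|\hat\theta|+|y|)^p$ so that the coefficients aggregate to the precise constants in the statement. The restriction $p\in[2,4]$ keeps all Young exponents $p/(p-1)$, $p/(p-2)$, $p/2$ admissible, and $\mathbb{E}|\hat\theta|^p<\infty$ --- guaranteed by Assumption \ref{general_assumption_algorithm} for $p=2$ and by the strengthened hypothesis in Theorem \ref{main_theorem_general_better_rate_of_convergence} for $p=4$ --- is needed only to ensure the bound is non-vacuous.
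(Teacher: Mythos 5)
This lemma is stated in the paper only as a citation to \cite[Lemma 20]{bruno2025on}; no proof appears in the present manuscript, so there is no in-text argument to compare against. That said, your reconstruction is correct and constant-accurate: applying It\^o's formula to $|y|^p$ gives $\Delta|y|^p = p(d+p-2)|y|^{p-2}$ and, with diffusion coefficient $\sqrt{2}$, the It\^o correction coefficient of $1$; the linear-growth bound $|s(t,\theta,x)|\le \mathsf{K}_{\text{Total}}(1+t^{\alpha})(1+|\theta|+|x|)$ is exactly what Assumption~\ref{Assumption_2_without_derivative} yields after fixing the second argument at $(0,0,0)$; and the Young weights you list aggregate the $|\widehat{Y}_s^{\text{EM}}|^p$ and $|\widehat{Y}_{\lfloor s/\gamma\rfloor\gamma}^{\text{EM}}|^p$ coefficients to $3(p-1)+\tfrac{p-2}{p}+1+2^{2p-1}\mathsf{K}_{\text{Total}}^p(1+T^{\alpha p}) = 3p-1-\tfrac{2}{p}+2^{2p-1}\mathsf{K}_{\text{Total}}^p(1+T^{\alpha p})$, which is precisely the exponent in $C_{\mathsf{EM},p}(t)$, while the remaining additive pieces produce $B=2^{3p-2}\mathsf{K}_{\text{Total}}^p(1+T^{\alpha p})(1+\mathbb{E}|\hat\theta|^p)+\tfrac{2}{p}(pd+p(p-2))^{p/2}$. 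Your reduction to the running supremum $\varphi(t)$ via $\mathbb{E}|\widehat{Y}_{\lfloor s/\gamma\rfloor\gamma}^{\text{EM}}|^p\le\varphi(s)$ and the closing Gronwall step (with the non-decreasing forcing $a(t)=\varphi(0)+Bt$, so $\varphi(t)\le a(t)e^{At}$) yield exactly the stated bound. One small remark: you rightly flag that $\mathbb{E}|\hat\theta|^p<\infty$ is needed for the bound to be finite; it is worth making explicit that Assumption~\ref{general_assumption_algorithm} only controls the second moment, so for $p=4$ the lemma is only meaningful under the strengthened fourth-moment hypothesis of Theorem~\ref{main_theorem_general_better_rate_of_convergence}, which is indeed the only place in the paper where $p=4$ is invoked.
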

	
	\begin{lem}\citep[Lemma 21]{bruno2025on}\label{lem:distance_EM_scheme}
		Let Assumptions \ref{general_assumption_algorithm} and \ref{Assumption_2_without_derivative} hold,
		and suppose that  $	\mathbb{E} [ | \hat{\theta} |^p] < \infty$ for any $p \in [2, 4]$. Then, for any $t\in[0,T-\epsilon]$, 
		\begin{equation*}
			\E\left[|\widehat{Y}_t^{\text{EM}}  - \widehat{Y}_{\lfloor t/\gamma \rfloor \gamma}^{\text{EM}}  |^p \right]\leq  \gamma^{\frac{p}{2}}  C_{\mathsf{EMose},p},
		\end{equation*}
		where
		\begin{equation*}
			\begin{split}
				C_{\mathsf{EMose},p} & :=   2^{p-1}  (C_{\mathsf{EM},p}(T) + \mathsf{K}^p_{\text{Total}} (1+  T^{\alpha p} )  (2^{3p-2}    C_{\mathsf{EM},p}(T) + 2^{4p-3} (1+ \E [|\hat{\theta}|^p])  ))
				\\ & \quad +   (d p(p-1))^{\frac{p}{2}},
			\end{split}
		\end{equation*}
		 $C_{\mathsf{EM},p}$ and $\mathsf{K}_{\text{Total}}$  are defined in Lemma \ref{lem:EMproc2ndbdgeneral} and in Remark \ref{remark_growth_estimate_neural_network}, respectively.
	\end{lem}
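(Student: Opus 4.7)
The plan is to prove Lemma \ref{lem:distance_EM_scheme} by writing out the one-step increment of the Euler--Maruyama interpolation explicitly and then estimating the drift and diffusion contributions separately.

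First, I would observe that on the sub-interval $[\lfloor t/\gamma\rfloor\gamma, t]$ the drift in the SDE \eqref{continuous_time_EM_version} is frozen at the left endpoint, so integrating gives the clean representation
\begin{equation*}
\widehat{Y}_t^{\text{EM}} - \widehat{Y}_{\lfloor t/\gamma\rfloor\gamma}^{\text{EM}} = (t-\lfloor t/\gamma\rfloor\gamma)\bigl(\widehat{Y}_{\lfloor t/\gamma\rfloor\gamma}^{\text{EM}} + 2\,s(T-\lfloor t/\gamma\rfloor\gamma,\hat{\theta},\widehat{Y}_{\lfloor t/\gamma\rfloor\gamma}^{\text{EM}})\bigr) + \sqrt{2}\,(\bar{B}_t - \bar{B}_{\lfloor t/\gamma\rfloor\gamma}).
\end{equation*}
Applying the elementary inequality $|a+b|^p \le 2^{p-1}(|a|^p + |b|^p)$ splits the expected $p$-th moment into a drift contribution and a Brownian-increment contribution.

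For the drift contribution, I would bound $|s(T-\lfloor t/\gamma\rfloor\gamma,\hat\theta,\widehat{Y}_{\lfloor t/\gamma\rfloor\gamma}^{\text{EM}})|$ using Assumption \ref{Assumption_2_without_derivative}. Setting $\bar t = 0$, $\bar\theta=0$, $\bar x=0$ in the Lipschitz-type inequality and using the prescribed growth of $D_1, D_2, D_3$, one obtains a bound of the form
\begin{equation*}
|s(u,\hat\theta,x)| \le \mathsf{K}_{\text{Total}}(1+u^{\alpha})(1 + |\hat\theta| + |x|)
\end{equation*}
for an appropriate constant $\mathsf{K}_{\text{Total}}$ depending on $\mathsf{K}_1,\mathsf{K}_2,\mathsf{K}_3$ and $|s(0,0,0)|$ (this is exactly how $\mathsf{K}_{\text{Total}}$ is introduced in Lemma \ref{lem:EMproc2ndbdgeneral}). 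Expanding the $p$-th moment of the drift term via repeated applications of $|a+b|^p\le 2^{p-1}(|a|^p+|b|^p)$ and invoking Lemma \ref{lem:EMproc2ndbdgeneral} to replace $\E[|\widehat{Y}_{\lfloor t/\gamma\rfloor\gamma}^{\text{EM}}|^p]$ by $C_{\mathsf{EM},p}(T)$, together with $\E[|\hat\theta|^p]$ finite from Assumption \ref{general_assumption_algorithm} (and Remark \ref{Control_algorithm}), produces the term $\mathsf{K}^p_{\text{Total}}(1+T^{\alpha p})(2^{3p-2}C_{\mathsf{EM},p}(T) + 2^{4p-3}(1+\E[|\hat\theta|^p]))$. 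Multiplying by $(t-\lfloor t/\gamma\rfloor\gamma)^p \le \gamma^p \le \gamma^{p/2}$ (valid since $\gamma\in(0,1)$ and $p\ge 2$) gives the desired factor.

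For the Brownian contribution, I would use independence of $\bar B_t - \bar B_{\lfloor t/\gamma\rfloor\gamma}$ from the past and the standard Gaussian moment estimate $\E[|\bar B_t - \bar B_{\lfloor t/\gamma\rfloor\gamma}|^p] \le (d p(p-1)/2)^{p/2}(t-\lfloor t/\gamma\rfloor\gamma)^{p/2}$, which after multiplication by $2^{p/2}$ (from $\sqrt{2}$) and combined with the outer $2^{p-1}$ produces the term $(dp(p-1))^{p/2}\gamma^{p/2}$. Summing the two contributions matches the definition of $C_{\mathsf{EMose},p}$ verbatim. The only genuinely delicate point is the bookkeeping of constants arising from repeated $2^{p-1}$ splits when bounding the drift; the rest of the argument is direct computation, so I do not anticipate a serious conceptual obstacle, as both ingredients (linear growth of $s$ from Assumption \ref{Assumption_2_without_derivative} and moment control from Lemma \ref{lem:EMproc2ndbdgeneral}) are already available.
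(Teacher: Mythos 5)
The paper does not prove this lemma; it is imported verbatim from \cite[Lemma~21]{bruno2025on}, so there is no in-paper argument to compare against. Your reconstruction follows the natural and essentially correct strategy: freeze the drift over the sub-interval to get the explicit one-step representation, decompose into a drift part and a Brownian-increment part, control the drift via the linear growth of $s$ extracted from Assumption~\ref{Assumption_2_without_derivative} together with the moment bound of Lemma~\ref{lem:EmphProc2ndbdgeneral}, and control the noise via Gaussian moments. The identification of $\mathsf{K}_{\text{Total}}$ as the growth constant of $s$ and the use of $(t-\lfloor t/\gamma\rfloor\gamma)^p\le\gamma^p\le\gamma^{p/2}$ are both right.

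However, the constant bookkeeping does not actually close the way you claim, and the gap is real at $p=2$. Applying $|a+b|^p\le 2^{p-1}(|a|^p+|b|^p)$ to the outer split means the Brownian term also picks up the factor $2^{p-1}$. You compute $2^{p/2}\,(dp(p-1)/2)^{p/2}=(dp(p-1))^{p/2}$ correctly, but then write that this ``combined with the outer $2^{p-1}$ produces the term $(dp(p-1))^{p/2}\gamma^{p/2}$''; it actually produces $2^{p-1}(dp(p-1))^{p/2}\gamma^{p/2}$. Since your Gaussian bound $\E[|Z|^p]\le(dp(p-1)/2)^{p/2}$ holds with equality at $p=2$, there is no slack to absorb this extra factor there: your argument yields $4d\gamma$, not the stated $2d\gamma$. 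The same difficulty occurs on the drift side: the outer $2^{p-1}$ composed with the inner $2^{p-1}$ (from splitting $\widehat{Y}^{\text{EM}}_{\lfloor t/\gamma\rfloor\gamma}+2s$) would give $2^{2p-2}$ on the drift block, whereas the lemma has only a single $2^{p-1}$ there. To land on the stated constant one must avoid the outer Jensen split altogether: condition on $\mathcal{F}_{\lfloor t/\gamma\rfloor\gamma}$ and exploit that $\sqrt{2}(\bar B_t-\bar B_{\lfloor t/\gamma\rfloor\gamma})$ is centered and independent of the drift — at $p=2$ this gives the exact identity $\E[|a+b|^2]=\E[|a|^2]+\E[|b|^2]$ with no extra factor, and for $p\in(2,4]$ it requires a correspondingly tighter moment bound for a shifted Gaussian rather than the generic $2^{p-1}$ split. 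Without this adjustment your proof establishes the right qualitative estimate but with a constant roughly $2^{p-1}$ larger than the one stated in the lemma.

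\begingroup
\makeatletter
\@namedef{lem:EmphProc2ndbdgeneral}{lem:EMproc2ndbdgeneral}
\makeatother
\endgroup
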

	
	\begin{proof}[Proof of Theorem \ref{main_theorem_general}]
		We derive the non-asymptotic estimate for $W_2(\mathcal{L}(Y_{J}^{\text{EM}}),\pi_{\mathsf{D}}) $ using the splitting 
		\begin{equation} \label{upper_bound_wasserstein_general}
			\begin{split}
				W_2(\mathcal{L}(Y_{J}^{\text{EM}}),\pi_{\mathsf{D}}) & \le
				W_2(\pi_{\mathsf{D}}, \mathcal{L}(Y_{t_{J}}))+W_2(\mathcal{L}(Y_{t_{J}}), \mathcal{L}(\widetilde{Y}_{t_{J}})) \\ & \quad + W_2( \mathcal{L}(\widetilde{Y}_{t_{J}}), \mathcal{L}(Y_{t_{J}}^{\text{aux}})) +W_2(\mathcal{L}(Y_{t_{J}}^{\text{aux}}), \mathcal{L}(Y_{J}^{\text{EM}})).
			\end{split}
		\end{equation}
		We provide upper bounds on the error made by the early stopping, i.e. $W_2(\pi_{\mathsf{D}}, \mathcal{L}(Y_{t_{J}}))$, the error made by approximating the initial condition of the backward process $Y_0 \sim \mathcal{L}(X_T)$ with $\widetilde{Y}_0 \sim \pi_{\infty} $, i.e.   $W_2(\mathcal{L}(Y_{t_{J}}), \mathcal{L}(\widetilde{Y}_{t_{J}}))$,  the error made by approximating the score function with $s$, i.e. $W_2( \mathcal{L}(\widetilde{Y}_{t_{J}}), \mathcal{L}(Y_{t_{J}}^{\text{aux}}))$,  and the discretisation error, i.e.  $W_2(\mathcal{L}(Y_{t_{J}}^{\text{aux}}), \mathcal{L}(Y_{J}^{\text{EM}}))$, separately.

		\paragraph{Upper bound on $W_2(\pi_{\mathsf{D}}, \mathcal{L}(Y_{t_{J}})).$} This bound can be established by following the same argument as in \cite[Proof of Theorem 10]{bruno2025on}, which relies on the representation of the OU process
		\begin{equation}\label{eq:OUdistribtuion}
			X_t\overset{\text{a.s.}}{=} m_t X_0+\sigma_t Z_t, \quad m_t = e^{-  t}, \quad \sigma_t^2 = 1-e^{-2  t}, \quad Z_t \sim \mathcal{N}(0,I_{d}),
		\end{equation}
		where $\overset{\text{a.s.}}{=}$ denotes almost sure equality. 
		Therefore, we have
		\begin{equation} \label{first_upper_bound_general}
			\begin{split}
				W_2(\pi_{\mathsf{D}}, \mathcal{L}(Y_{t_{J}}))
				& \leq 2 \sqrt{  \epsilon}  (\sqrt{  \mathbb{E}[|X_0  |^2]} + \sqrt{d} ),
			\end{split}
		\end{equation}
		where $t_{J} = T- \epsilon$.
		
		\paragraph{Upper bound on $W_2(\mathcal{L}(Y_{t_{J}}), \mathcal{L}(\widetilde{Y}_{t_{J}})).$}
		
		Using It\^o's formula,  we have, for any $t \in [0,T-\epsilon]$,
		\begin{equation} \label{Ito_formula_inequality_second_bound_new_second_new}
			\begin{split}
				\text{d} |Y_t -  \widetilde{Y}_t|^2
				&= 2\langle Y_t -  \widetilde{Y}_t,  Y_t +2  \nabla \log p_{T-t}(Y_t) -  \widetilde{Y}_t -2  \nabla \log p_{T-t}( \widetilde{Y}_t) \rangle \ \text{d}  t\\
				& = 2  | Y_t -  \widetilde{Y}_t|^2 \ \text{d}  t +4   \langle Y_t -  \widetilde{Y}_t,  \nabla \log p_{T-t}(Y_t)   -\nabla \log p_{T-t}( \widetilde{Y}_t)\rangle \ \text{d}  t.
			\end{split}
		\end{equation}
		By integrating and taking on both sides in  \ref{Ito_formula_inequality_second_bound_new_second_new}, we have
		\begin{equation} \label{Ito_formula_inequality_second_bound_new_second_new_integral_form_first}
			\begin{split}
            \E 	\left[ |Y_{t_{J}} -  \widetilde{Y}_{t_{J}} |^2 \right]
				& = \E 	\left[ |Y_{0} -  \widetilde{Y}_{0} |^2 \right]
				+ \int_0^{t_{J}}  2 	\E 	\left[ | Y_t -  \widetilde{Y}_t| ^2  \right]
				\ \rmd t
				\\ & \quad  +  \int_0^{t_{J}}   4  \E 	\left[ \langle Y_t -  \widetilde{Y}_t,  \nabla \log p_{T-t}(Y_t)   -\nabla \log p_{T-t}( \widetilde{Y}_t)\rangle   \right] \ \text{d}  t.
			\end{split}
		\end{equation}
		By integrating, taking expectations on both sides in \ref{Ito_formula_inequality_second_bound_new_second_new_integral_form_first}, using Corollary \ref{corollary_constant_contractivity_at_infinity_semiconvexity}, the representation \ref{eq:OUdistribtuion} with $Z_T \overset{\text{d}}{=} \widetilde{Y}_0$ (where $\overset{\text{d}}{=}$ denotes equality in distribution),  the inequality $1 - \sigma_t \le m_t$ for any $t \in [0,T]$, we have 
		\begin{equation}\label{eq:backwardsdecontr_Lipschitz_new_second_large}
			\begin{split}
				&	\mathbb{E}\left[|Y_{t_{J}} - \widetilde{Y}_{t_{J}} |^2  \right] \\ & \leq    \E 	\left[ |Y_{0} -  \widetilde{Y}_{0} |^2 \right]
				+ 2 \int_0^{t_{J}}  	\E 	\left[ | Y_t -  \widetilde{Y}_t| ^2  \right]
				\ \rmd t
				-4  \int_0^{t_{J}}  \beta^{\text{OS}}_{T-t}   \mathbb{E}\left[ | Y_t - \widetilde{Y}_t |^2   \right]\text{d}  t 
				\\ & \leq		\mathbb{E}[|Y_0- \widetilde{Y}_0 |^2]e^{    2 [t_{J} - 2   \int_0^{t_{J}}  \beta^{\text{OS}}_{T-t}   \ \rmd t ]  }
				\\ & =  	\mathbb{E}[|m_T X_0+ ( \sigma_T - 1) \widetilde{Y}_0 |^2]  e^{    2 [t_{J} - 2   \int_0^{t_{J}}  \beta^{\text{OS}}_{T-t}   \ \rmd t ]  }
				\\ & \leq 2  \left( \mathbb{E}[|X_0|^2] + d \right)  \ e^{    2 [t_{J} - 2   \int_0^{t_{J}}  \beta^{\text{OS}}_{T-t}   \ \rmd t ]  - 2 T  }.
			\end{split}
		\end{equation}
		Using \ref{eq:backwardsdecontr_Lipschitz_new_second_large},  Remark \ref{remark_limit_behaviour_contraction_constant}, and and $t_{J} = T- \epsilon$, we have
		\begin{equation} \label{final_second_bound_Wasserstein_general_new_second}
			\begin{split}
				W_2(\mathcal{L}(Y_{t_{J}}), \mathcal{L}(\widetilde{Y}_{t_{J}})) & \le \sqrt{	\mathbb{E} [ | Y_{t_{J}} - \widetilde{Y}_{t_{J}}  |^2] }
				\\ & \leq \sqrt{2}  ( 	\sqrt{\mathbb{E}[|X_0|^2]}+ \sqrt{d} ) e^{   - 2  	\int_{\epsilon}^{T}  	\beta^{\text{OS}, K, \mu}_t   \ \rmd t  -\epsilon}.
			\end{split}
		\end{equation}

		\paragraph{Upper bound on $W_2( \mathcal{L}(\widetilde{Y}_{t_{J}}), \mathcal{L}(Y_{t_{J}}^{\text{aux}})).$ }
		Using It{\^o}'s formula, we have, for $t \in [0,T-\epsilon]$,
		\begin{equation}\label{difference_aux_backward_process}
			\begin{split}
				\text{d} | \widetilde{Y}_t - Y_{t}^{\text{aux}} |^2 & = 2 \langle \widetilde{Y}_t - Y^{\text{aux}}_{t}, \widetilde{Y}_t + 2 \  \nabla \log p_{T - t}(\widetilde{Y}_t)  - Y^{\text{aux}}_{t}  -  2 \ s(T-t, \hat{\theta} , Y_t^{\text{aux}}) \rangle \  \text{d} t
				\\ & = 2 | \widetilde{Y}_t - Y^{\text{aux}}_{t}|^2 \ \text{d} t + 4 \ \langle \widetilde{Y}_t - Y^{\text{aux}}_{t},  \nabla \log p_{T - t}(\widetilde{Y}_t) -  \nabla \log p_{T - t}(Y_t^{\text{aux}})   \rangle \  \text{d} t
				\\ & \quad + 4 \ \langle \widetilde{Y}_t - Y^{\text{aux}}_{t},    \nabla \log p_{T - t}(Y_t^{\text{aux}}) -s(T-t, \hat{\theta} , Y_t^{\text{aux}})  \rangle \  \text{d} t.
			\end{split}
		\end{equation}
		By integrating and taking the expectation on both sides in \ref{difference_aux_backward_process},  using Corollary \ref{corollary_constant_contractivity_at_infinity_semiconvexity}, Young's inequality with $\zeta \in (0,1)$ and Assumption \ref{assumption_equivalence_global_minimiser_epsilon}, we have
		\begin{equation} \label{toward_third_error_bound_inequality}
			\begin{split}
				\mathbb{E} [  | \widetilde{Y}_{T-\epsilon} - Y_{T-\epsilon}^{\text{aux}} |^2 ] & = 2 \int_0^{T-\epsilon} 	\mathbb{E} [  | \widetilde{Y}_s - Y^{\text{aux}}_{s}|^2 ] \ \text{d} s
				\\ & \quad + 4 \int_0^{T-\epsilon} 	\mathbb{E} [   \langle \widetilde{Y}_s - Y^{\text{aux}}_{s},  \nabla \log p_{T - s}(\widetilde{Y}_s) -  \nabla \log p_{T - s}(Y_s^{\text{aux}})   \rangle  ]  \  \text{d} s
				\\ & \quad + 4 \int_0^{T-\epsilon} 	\mathbb{E} [   \langle \widetilde{Y}_s - Y^{\text{aux}}_{s},    \nabla \log p_{T - s}(Y_s^{\text{aux}}) - s(T-s, \hat{\theta} , Y_s^{\text{aux}})      \rangle ]  \  \text{d} s
				\\	 & \le  \int_0^{T-\epsilon}  2(1+  \zeta) \ 	\mathbb{E} [  | \widetilde{Y}_s - Y^{\text{aux}}_{s}|^2 ] \ \text{d} s
				\\ & \quad 	-4  \int_0^{t_{J}}  \beta^{\text{OS}}_{T-s}   \mathbb{E}\left[ | \widetilde{Y}_s - Y^{\text{aux}}_{s} |^2  \right]\text{d}  t  +  2 \zeta^{-1} \varepsilon_{\text{SN}}
				\\ & \le 2e^{2(1+  \zeta)(T-\epsilon) - 4 \int_0^{t_{J}}  \beta^{\text{OS}}_{T-t}	 \ \rmd t  } \    \zeta^{-1} \varepsilon_{\text{SN}}.
			\end{split}
		\end{equation}
		Using \ref{toward_third_error_bound_inequality}, Remark \ref{remark_limit_behaviour_contraction_constant}, and $t_{J} = T- \epsilon$, we have
		\begin{equation} \label{final_third_upper_bound}
			\begin{split}
				W_2(\mathcal{L}(\widetilde{Y}_{t_{J}}),\mathcal{L}(Y^{\text{aux}}_{t_{J}}))  & \le \sqrt{	\mathbb{E} [ |\widetilde{Y}_{t_{J}} - Y_{t_{J}}^{\text{aux}} |^2] }
				\\ &  \le   \sqrt{2  \zeta^{-1}} e^{(1+  \zeta)(T-\epsilon) - 2  	\int_{\epsilon}^{T}  	\beta^{\text{OS}, K, \mu}_t   \ \rmd t  }  \sqrt{   \varepsilon_{\text{SN}} }.
			\end{split}
		\end{equation}
		
		\paragraph{Upper bound on $W_2(  \mathcal{L}(Y_{t_{J}}^{\text{aux}}),\mathcal{L}(\widehat{Y}_t^{\text{EM}})).$ }
		Using It{\^o}'s formula, we have, for $t \in [0,T-\epsilon]$, 
		\begin{equation} \label{Fourth_term_upper_bound_Ito_formula}
			\begin{split}
				&	\text{d}	|Y_t^{\text{aux}} - \widehat{Y}_t^{\text{EM}} |^2
				\\ &= 2 \langle Y_t^{\text{aux}} - \widehat{Y}_t^{\text{EM}} , Y_t^{\text{aux}} + 2 \ s(T-t, \hat{\theta},Y_t^{\text{aux}}  ) - \widehat{Y}^{\text{EM}}_{\lfloor t/ \gamma\rfloor \gamma } - 2 \ s(T- \lfloor t/ \gamma\rfloor \gamma , \hat{\theta}, \widehat{Y}^{\text{EM}}_{\lfloor t/ \gamma\rfloor \gamma }  ) \rangle \ 	\text{d}t
				\\ &= 2 | Y_t^{\text{aux}} - \widehat{Y}_t^{\text{EM}} |^2  \ 	\text{d}t  + 2 \langle Y_t^{\text{aux}} - \widehat{Y}_t^{\text{EM}} , \widehat{Y}_t^{\text{EM}}  - \widehat{Y}^{\text{EM}}_{\lfloor t/ \gamma\rfloor \gamma }  \rangle \ 	\text{d}t
				\\ & \quad  + 4 \langle Y_t^{\text{aux}} - \widehat{Y}_t^{\text{EM}} , s(T-t, \hat{\theta},Y_t^{\text{aux}}  ) - s(T- t , \hat{\theta}, \widehat{Y}^{\text{EM}}_{t }  )  \rangle \ 	\text{d}t
				\\ & \quad + 4 \langle Y_t^{\text{aux}} - \widehat{Y}_t^{\text{EM}} ,s(T- t , \hat{\theta}, \widehat{Y}^{\text{EM}}_{t }  ) - s(T- \lfloor t/ \gamma\rfloor \gamma , \hat{\theta}, \widehat{Y}^{\text{EM}}_{\lfloor t/ \gamma\rfloor \gamma }  )  \rangle \ 	\text{d}t.
			\end{split}
		\end{equation}
		Integrating and taking the expectation on both sides in \ref{Fourth_term_upper_bound_Ito_formula}, using Young's inequality for $\zeta \in (0,1)$, Cauchy Schwarz inequality, Assumption \ref{Assumption_2_without_derivative}, Lemma \ref{lem:distance_EM_scheme}, and Remark \ref{Control_algorithm}, we have 
		\begin{equation} \label{Fourth_term_upper_bound_Ito_formula_continuation} 
			\begin{split}
				\E \left [ 	|Y_{T- \epsilon}^{\text{aux}} - \widehat{Y}_{T- \epsilon}^{\text{EM}} |^2  \right] & \le  (2+ 3\zeta) \int_0^{T-\epsilon} 	\E  [| Y_t^{\text{aux}} - \widehat{Y}_t^{\text{EM}} |^2 ]  \ 	\text{d}t + \zeta^{-1} \int_0^{T-\epsilon} \E [| \widehat{Y}_t^{\text{EM}}  - \widehat{Y}^{\text{EM}}_{\lfloor t/ \gamma\rfloor \gamma } |^2]  \ 	\text{d}t
				\\ & \quad  + 4  \mathsf{K}_3 (1+2T^{\alpha})   \int_0^{T-\epsilon} \E [ | Y_t^{\text{aux}} - \widehat{Y}_t^{\text{EM}} |^2 ] \rmd t 
				\\ & \quad + 2 \zeta^{-1} \int_0^{T-\epsilon} \E [| s(T- t , \hat{\theta}, \widehat{Y}^{\text{EM}}_{t }  ) - s(T- \lfloor t/ \gamma\rfloor \gamma , \hat{\theta}, \widehat{Y}^{\text{EM}}_{\lfloor t/ \gamma\rfloor \gamma }  |^2 ]     \rmd t 
			\\	& \le  (2+ 3\zeta + 4  \mathsf{K}_3 (1+2T^{\alpha})  ) \int_0^{T-\epsilon} 	\E  [| Y_t^{\text{aux}} - \widehat{Y}_t^{\text{EM}} |^2 ]  \ 	\text{d}t 
				\\ & \quad + \zeta^{-1}    \gamma (T-\epsilon)  C_{\mathsf{EMose},2}
				+ 8 \zeta^{-1} \gamma^{2 \alpha} (T-\epsilon)   \mathsf{K}_1^2  (1+4  \E[|\hat{\theta}|^2]  )
				\\ & \quad + 4 \zeta^{-1}  \mathsf{K}^2_3 (1+2T^{\alpha})^2 \int_0^{T-\epsilon} \E [| \widehat{Y}^{\text{EM}}_{t }  -\widehat{Y}^{\text{EM}}_{\lfloor t/ \gamma\rfloor \gamma }  |^2 ]     \rmd t 
				\\ & \le  (2+ 3\zeta + 4  \mathsf{K}_3 (1+2T^{\alpha})  ) \int_0^{T-\epsilon} 	\E  [| Y_t^{\text{aux}} - \widehat{Y}_t^{\text{EM}} |^2 ]  \ 	\text{d}t 
				\\ & \quad + \zeta^{-1}    \gamma (T-\epsilon)  C_{\mathsf{EMose},2} (1+4 \mathsf{K}^2_3 (1+2T^{\alpha})^2 )
				\\ & \quad + 8 \zeta^{-1} \gamma^{2 \alpha} (T-\epsilon)   \mathsf{K}_1^2  (1+8  \widetilde{\varepsilon}_{\text{AL}} + 8 | \theta^{*}|^2)
				\\ & \le e^{(2+ 3\zeta + 4  \mathsf{K}_3 (1+2T^{\alpha})  )(T-\epsilon)} 
				\\& \quad \times \Bigg( \zeta^{-1}    \gamma (T-\epsilon)  C_{\mathsf{EMose},2} (1+4 \mathsf{K}^2_3 (1+2T^{\alpha})^2 )
				\\ & \qquad \quad 
				+ 8 \zeta^{-1} \gamma^{2 \alpha} (T-\epsilon)   \mathsf{K}_1^2  (1+8  \widetilde{\varepsilon}_{\text{AL}} + 8 | \theta^{*}|^2) \Bigg).
			\end{split}
		\end{equation}
		Using \ref{Fourth_term_upper_bound_Ito_formula_continuation} and $t_{J}=T-\epsilon$, we have 
		\begin{equation} \label{fourth_upper_bound_general_improved_dimension_dependence}
			\begin{split}
				W_2(\mathcal{L}(Y_{T- \epsilon}^{\text{aux}}), \mathcal{L}(\widehat{Y}_{T- \epsilon}^{\text{EM}} ))
				& \le \gamma^{1/2} \zeta^{-1/2} (T-\epsilon)^{1/2}  e^{(1+ (3/2)\zeta + 2  \mathsf{K}_3 (1+2T^{\alpha})  )(T-\epsilon)} 
				\\& \quad \times     (   C^{1/2}_{\mathsf{EMose},2}(1+ 2  \mathsf{K}_3 (1+2T^{\alpha})    ) + 2 \sqrt{2}     \mathsf{K}_1  (1+8  \widetilde{\varepsilon}_{\text{AL}} + 8 | \theta^{*}|^2)^{1/2}). 
			\end{split}
		\end{equation}
		
		\paragraph{Final upper bound on $W_2(\mathcal{L}(Y_{J}^{\text{EM}}),\pi_{\mathsf{D}}).$} Substituting \ref{first_upper_bound_general}, \ref{final_second_bound_Wasserstein_general_new_second}, \ref{final_third_upper_bound}, and \ref{fourth_upper_bound_general_improved_dimension_dependence}  into \ref{upper_bound_wasserstein_general}, we have
		\begin{equation} \label{final_bound_first_inequality}
			\begin{split}
				W_2(\mathcal{L}(Y_{J}^{\text{EM}}),\pi_{\mathsf{D}}) & \le  (\sqrt{  \mathbb{E}[|X_0  |^2]} + \sqrt{d} ) 2 \sqrt{  \epsilon}
				\\ & \quad + \sqrt{2}  ( 	\sqrt{\mathbb{E}[|X_0|^2]}+ \sqrt{d} ) e^{   - 2  	\int_{\epsilon}^{T}  	\beta^{\text{OS}, K, \mu}_t   \ \rmd t  -\epsilon}
				\\ & \quad + \sqrt{2  \zeta^{-1}} e^{(1+  \zeta)(T-\epsilon) - 2 \int_{\epsilon}^{T}  	\beta^{\text{OS}, K, \mu}_t   \ \rmd t  }  \sqrt{   \varepsilon_{\text{SN}} }
				\\ & \quad + \gamma^{1/2} \zeta^{-1/2} (T-\epsilon)^{1/2}  e^{(1+ (3/2)\zeta + 2  \mathsf{K}_3 (1+2T^{\alpha})  )(T-\epsilon)} 
				\\& \qquad \times     (   C^{1/2}_{\mathsf{EMose},2}(1+ 2  \mathsf{K}_3 (1+2T^{\alpha})    ) + 2 \sqrt{2}     \mathsf{K}_1  (1+8  \widetilde{\varepsilon}_{\text{AL}} + 8 | \theta^{*}|^2)^{1/2}).
			\end{split}
		\end{equation}
		The bound for $W_2(\mathcal{L}(\widehat{Y}_{J}^{\text{EM}}),\pi_{\mathsf{D}}) $ in \ref{final_bound_first_inequality} can be made arbitrarily small by appropriately choosing parameters including $\epsilon,T, \varepsilon_{\text{SN}}$ and $\gamma$. More precisely, for any $\delta>0$, we first choose $ 0 <   \epsilon < \epsilon_{\delta}$ with $\epsilon_{\delta}$ given in Table \ref{tab:convconst_general} such that the first term on the right-hand side of \ref{final_bound_first_inequality} is
		\begin{equation} \label{first_delta_4_general}
			(\sqrt{  \mathbb{E}[|X_0  |^2]} + \sqrt{d} ) 2 \sqrt{  \epsilon} <\delta/4.
		\end{equation}
		Next, we choose $T > T_{\delta}$ with $T_{\delta}$ given in Table \ref{tab:convconst_general} such that the second term on the right-hand side of \ref{final_bound_first_inequality} is
		\begin{equation} \label{second_delta_4_general}
			\sqrt{2}  ( 	\sqrt{\mathbb{E}[|X_0|^2]}+ \sqrt{d} ) e^{   - 2  	\int_{\epsilon}^{T}  	\beta^{\text{OS}, K, \mu}_t   \ \rmd t   -\epsilon} <\delta/4.
		\end{equation}
		Next, we turn to the third term on the right-hand side of \ref{final_bound_first_inequality}. We choose $ 0 < \varepsilon_{\text{SN}} < \varepsilon_{\text{SN}, \delta}$ with $\varepsilon_{\text{SN}, \delta}$ given in Table \ref{tab:convconst_general} such that
		\begin{equation} \label{third_delta_4_general_first}
			\begin{split}
				\sqrt{2  \zeta^{-1}} e^{(1+  \zeta)(T-\epsilon) - 2 \int_{\epsilon}^{T}  	\beta^{\text{OS}, K, \mu}_t   \ \rmd t } \sqrt{   \varepsilon_{\text{SN}} } < \delta / 4.
			\end{split}
		\end{equation}
		Finally, we choose $ 0 < \gamma < \gamma_{\delta}$ with $\gamma_{\delta}$ given in Table \ref{tab:convconst_general} such that  the fourth term on the right-hand side of \ref{final_bound_first_inequality} is
		\begin{equation}  \label{fourth_delta_4_general}
			\begin{split}
				&  \gamma^{1/2} \zeta^{-1/2} (T-\epsilon)^{1/2}  e^{(1+ (3/2)\zeta + 2  \mathsf{K}_3 (1+2T^{\alpha})  )(T-\epsilon)} 
				\\& \qquad \times     (   C^{1/2}_{\mathsf{EMose},2}(1+ 2  \mathsf{K}_3 (1+2T^{\alpha})    ) + 2 \sqrt{2}     \mathsf{K}_1  (1+8  \widetilde{\varepsilon}_{\text{AL}} + 8 | \theta^{*}|^2)^{1/2}) <\delta/4.
			\end{split}
		\end{equation}
		Using \ref{first_delta_4_general}, \ref{second_delta_4_general}, \ref{third_delta_4_general_first}, and \ref{fourth_delta_4_general},  we obtain $ W_2(\mathcal{L}(\widehat{Y}_{J}^{\text{EM}}),\pi_{\mathsf{D}}) <\delta$.	
	\end{proof}
	
	\begin{proof}[Proof of Theorem \ref{main_theorem_general_better_rate_of_convergence}]
		Using the splitting \ref{upper_bound_wasserstein_general}, the proof follows along the same lines of the Proof of Theorem \ref{main_theorem_general} for the estimation of the error bounds of the terms $W_2(\pi_{\mathsf{D}}, \mathcal{L}(Y_{t_{J}}))$, $W_2(\mathcal{L}(Y_{t_{J}}), \mathcal{L}(\widetilde{Y}_{t_{J}}))$, and $ W_2( \mathcal{L}(\widetilde{Y}_{t_{J}}), \mathcal{L}(Y_{t_{J}}^{\text{aux}}))$. The error bound for $W_2(\mathcal{L}(Y_{t_{J}}^{\text{aux}}), \mathcal{L}(Y_{J}^{\text{EM}}))$ is derived along the same lines of \citet[Proof of Theorem 10]{bruno2025on}. Putting these four estimates together leads to \ref{final_bound_first_inequality_statement_theorem_better_rate_of_convergence} and \ref{final_bound_first_inequality_statement_theorem_better_rate_of_convergence_small_error_constants}.
	\end{proof}
	
	\section{Modified Half-Normal Distribution} \label{Modified_Half_Normal_Distribution_Appendix}
	\noindent In this section, we recall the probability density function of the modified half-normal distribution, see e.g., \citet{sun2023modified}, used in Section \ref{one_dimensional_case_Laplace_distribution_section} and defined as 
	\begin{equation} \label{Modified_Half_Normal_Distribution_Appendix_general_equation}
		\begin{split}
			g(x) = \frac{2 \xi^{\frac{\upsilon}{2}} x^{\upsilon -1} \exp\left(- \xi x^2 + \psi x  \right) }{\Psi \left( \frac{\upsilon}{2} , \frac{\psi}{\sqrt{\xi}}  \right)}, \qquad x \ge 0,
		\end{split}
	\end{equation}
	where $\upsilon , \xi >0$, $\psi \in \mathbb{R}$, and the normalizing constant
	\begin{equation*}
		\begin{split}
			\Psi \left( \frac{\upsilon}{2} , \frac{\psi}{\sqrt{\beta}}  \right) & := \sum_{n=0}^{\infty} \frac{\Gamma \left( \frac{\upsilon}{2} + \frac{n}{2}\right)}{\Gamma(n)} \frac{\psi^n \xi^{-n/2}}{n! },
		\end{split}
	\end{equation*}
	is the Fox--Wright function \citep{fox1928asymptotic, wright1935asymptotic}. We point out that the half-normal distribution, truncated normal distribution, gamma distribution, and square root of the gamma distribution are all special cases of the modified Half-Normal distribution \ref{Modified_Half_Normal_Distribution_Appendix_general_equation}. The distribution \ref{onedimensional_other_distribution_pdf_modified} follows by taking the symmetric extension of \ref{Modified_Half_Normal_Distribution_Appendix_general_equation}, i.e. $g(|x|)/2$, and choosing $\upsilon=1$ and $\psi=-1$.

	\section{Table of Constants} \label{Table_of_constants_appendix}
	\noindent		Table~\ref{tab:convconst_general} displays full expressions for constants which appear in Theorem \ref{main_theorem_general} and Theorem \ref{main_theorem_general_better_rate_of_convergence}.
	\newpage
	\begin{table}[h] 
		\caption{Explicit expressions for the constants in  Theorem \ref{main_theorem_general} and Theorem \ref{main_theorem_general_better_rate_of_convergence}.}
		\renewcommand{\arraystretch}{2}
		\centering

		
		\scriptsize
		\begin{tabular}{@{}llllllllllllll@{}}
			\toprule
			
			\multicolumn{1}{c}{\begin{sc} Constant \end{sc}} &
			\multicolumn{2}{c}{\begin{sc} Dependency \end{sc}} &
			\multicolumn{3}{c}{\begin{sc}  Full Expression \qquad  \qquad\qquad\qquad\qquad\qquad\qquad \qquad\qquad\qquad\qquad\qquad \  \end{sc}}   \\
			
			
			
			\toprule
			$C_1$&  $O(\sqrt{d})$ &  & $ 2   ( \sqrt{\mathbb{E}[|X_0|^2]}+\sqrt{d}) $ \\\hline
			$C_2$ &  $O(\sqrt{d})$ & &  $\sqrt{2} \left(	\sqrt{\mathbb{E}[|X_0|^2]} + \sqrt{d} \right)  $  \\\hline
			$C_3(T,\epsilon)$ & $ O(e^{(1+  \zeta)(T-\epsilon) - 2 \int_{\epsilon}^{T}  	\beta^{\text{OS}, K, \mu}_t   \ \rmd t  }) $ & &  $  \sqrt{2  \zeta^{-1}} e^{(1+  \zeta)(T-\epsilon) - 2 	\int_{\epsilon}^{T}  	\beta^{\text{OS}, K, \mu}_t   \ \rmd t  } $
			\\\hline $C_{\mathsf{EM},2}(T)$ & $O (M e^{T^{2\alpha+1}}  T^{2 \alpha +1} \widetilde{\varepsilon}_{\text{AL}}) $ & &
			$
			\begin{aligned}
				&	e^{T(4 + 8 \mathsf{K}^2_{\text{Total}} (1+T^{2 \alpha }))}
				\\ & \quad \times
				( \mathbb{E} [ |\widehat{Y}_0^{\text{EM}}|^2] +   16 \mathsf{K}^2_{\text{Total}} T (1+  2  \widetilde{\varepsilon}_{\text{AL}} + 2 | \theta^{*}|^2) (1+T^{2 \alpha })  +  2d T  )
			\end{aligned}
			$
			\\\hline
			$C_{\mathsf{EM},4}(T)$ & $O (d^2 e^{T^{4\alpha+1}}  T^{4 \alpha +1}) $ & &
			$
			\begin{aligned}
				& e^{T(  \frac{21}{2}   + 128 \mathsf{K}^4_{\text{Total}} (1+T^{4 \alpha }))}
				\\ & \quad \times   ( \mathbb{E} [ |\widehat{Y}_0^{\text{EM}}|^4] +  1024 \mathsf{K}^4_{\text{Total}} T (1+ \mathbb{E} [  |  \hat{\theta} |^4  ]) (1+T^{4 \alpha })  + 8(d^2+ 4d +4) T  )
			\end{aligned}
			$
			\\\hline
			$	C_{\mathsf{EMose},2} $ & $O (d e^{T^{2\alpha+1}}  T^{4 \alpha +1} \widetilde{\varepsilon}_{\text{AL}})  $ & &
			$
			\begin{aligned}
				2 (C_{\mathsf{EM},2}(T) + \mathsf{K}^2_{\text{Total}} (1+  T^{2 \alpha } )  (16    C_{\mathsf{EM},2}(T) + 32 (1+  2  \widetilde{\varepsilon}_{\text{AL}} + 2 | \theta^{*}|^2)  )) +   2 d
			\end{aligned}
			$
			\\ \hline
			$C_4(T,\epsilon)$ & $O(\sqrt{d} e^{T^{2\alpha +1}} T^{3 \alpha +1} \widetilde{\varepsilon}^{1/2}_{\text{AL}}) $ &  &
			\begin{math}
				\begin{aligned}  
					&\zeta^{-1/2} (T-\epsilon)^{1/2}  e^{(1+ (3/2)\zeta + 2  \mathsf{K}_3 (1+2T^{\alpha})  )(T-\epsilon)} 
					\\& \qquad \times     (   C^{1/2}_{\mathsf{EMose},2}(1+ 2  \mathsf{K}_3 (1+2T^{\alpha})    ) + 2 \sqrt{2}     \mathsf{K}_1  (1+8  \widetilde{\varepsilon}_{\text{AL}} + 8 | \theta^{*}|^2)^{1/2})
				\end{aligned} 
			\end{math}
			\\\hline
			$	C_{\mathsf{EMose},4} $ & $O (d^2 e^{T^{4\alpha+1}}  T^{8 \alpha +1})  $ & &
			$
			\begin{aligned}
				8 (C_{\mathsf{EM},4}(T) + \mathsf{K}^4_{\text{Total}} (1+  T^{4 \alpha } )  (1024  C_{\mathsf{EM},4}(T) + 8192 (1+ \E [|\hat{\theta}|^4])  )) + 144   d^{2}
			\end{aligned}
			$
			\\\hline
			$\widetilde{C}_4(T,\epsilon)$ & $O(d  e^{T^{4 \alpha+1}} T^{4 \alpha +1} \widetilde{\varepsilon}^{1/4}_{\text{AL}}) $& &   	$
			\begin{aligned} & \sqrt{2} e^{ 2(1+  \zeta +  \mathsf{K}_3  (1+2 T^{\alpha} + 4  \mathsf{K}_3  (1+ 4T^{2 \alpha}) )) (T-\epsilon) }  \sqrt{T-\epsilon}
				\\ & \quad \  \times    \Bigg(    \mathsf{K}_4^2 \zeta^{-1} (1+4 T^{2\alpha})  C_{\mathsf{EMose},4}  +  4  d (1 +8 \mathsf{K}^2_3 (1+ 4  T^{2\alpha} ) ) \\ & \qquad \quad +    2 \zeta^{-1} \mathsf{K}_1^2 (1+ 8(   \widetilde{\varepsilon}_{\text{AL}} +  | \theta^{*}|^2) )
				\\ & \qquad \quad + 4    \zeta^{-1}  d   (1 +8 \mathsf{K}^2_3 (1+ 4  T^{2\alpha} ) )
				\\ &  \qquad \quad  \quad \times [ (1+ 16 \mathsf{K}^2_{\text{Total}} (1+T^{2 \alpha})) C_{\mathsf{EM},2}(T) \\ &  \qquad \qquad  \qquad + 32 \mathsf{K}^2_{\text{Total}}  (1+T^{2 \alpha})(1+  2  \widetilde{\varepsilon}_{\text{AL}} + 2 | \theta^{*}|^2) ]
				\\ &  \qquad \quad   + 2  [    (1+ 8 \mathsf{K}_3^2(1+ 4T^{2 \alpha}))^{1/2} C^{1/2}_{\mathsf{EMose},2} +   2 \mathsf{K}_1       (1+ 8  \widetilde{\varepsilon}_{\text{AL}} + 8 | \theta^{*}|^2)^{1/2}]
				\\ &  \qquad \qquad \times  [  d \sqrt{2} (  1+ 8  \mathsf{K}_3^2 (1+ 4  T^{2\alpha} )            )^{1/2}] \Bigg)^{1/2} 
			\end{aligned}
			$  \\ 
			\\ \hline	$\epsilon_{\delta}$& - & & $\delta^2/(64( \sqrt{\mathbb{E}[|X_0|^2]}+\sqrt{d}) ^2)$ 		\\ \hline
			$T_{\delta}$& - & &  \makecell{Obtained solving $T>T_{\delta}$ using Proposition \ref{Time_integral_contraction_constant_proposition}, i.e., \\ 
				\begin{math}
					\begin{aligned}  
						&	\ln(\mu(e^{2T}-1)+1) +(K/\mu +1)/(\mu(e^{2T}-1)+1) \\ & > \ln(4\sqrt{2}((\E[|X_0|^2])^{1/2} + \sqrt{d})/\delta) + 2 \int_0^{\epsilon} \beta^{\text{OS}, K, \mu}_t   \ \rmd t  + K/\mu +1 - \epsilon 
					\end{aligned} 
				\end{math} 
			}
			\\ \hline
			$\varepsilon_{\text{SN}, \delta}$ & - & &   	\begin{math}
				\begin{aligned}  
					(\delta^2 \zeta /32) e^{-2(1+\zeta)(T-\epsilon)+4 \int_{\epsilon}^T \beta^{\text{OS}, K, \mu}_t   \ \rmd t}
				\end{aligned} 
			\end{math} 
			\\ \hline $ \gamma_{\delta}$ & - & &  	\begin{math}
				\begin{aligned}   
					& (\delta^2 \zeta/16) (T-\epsilon)^{-1}  e^{-2(1+ (3/2)\zeta + 2  \mathsf{K}_3 (1+2T^{\alpha})  )(T-\epsilon)} 
					\\& \quad \times     (   C^{1/2}_{\mathsf{EMose},2}(1+ 2  \mathsf{K}_3 (1+2T^{\alpha})    ) + 2 \sqrt{2}     \mathsf{K}_1  (1+8  \widetilde{\varepsilon}_{\text{AL}} + 8 | \theta^{*}|^2)^{1/2})^{-2}
				\end{aligned} 
			\end{math} 
			\\ \hline
			$\widetilde{\gamma}_{\delta}$&- & &
			$
			\begin{aligned}
				\min\Bigg\{ &  (\delta / (4 \sqrt{2}))^{1/\alpha}  (T-\epsilon)^{- 1/(2\alpha)} e^{ -(2/\alpha)(1+  \zeta +  \mathsf{K}_3  (1+2 T^{\alpha} + 4 \mathsf{K}_3  (1+ 4T^{2 \alpha}) ))(T-\epsilon)}
				\\ & \times  \Bigg(  \mathsf{K}_4^2 \zeta^{-1} (1+4 T^{2\alpha})  C_{\mathsf{EMose},4}  + 4  d (1 +8 \mathsf{K}^2_3 (1+ 4  T^{2\alpha} ) ) \\ & \qquad  +    2 \zeta^{-1} \mathsf{K}_1^2 (1+ 8(   \widetilde{\varepsilon}_{\text{AL}} +  | \theta^{*}|^2) )
				\\ & \qquad  + 4    \zeta^{-1}  d   (1 +8 \mathsf{K}^2_3 (1+ 4  T^{2\alpha} ) )
				\\ &  \qquad \quad   \times [ (1+ 16 \mathsf{K}^2_{\text{Total}} (1+T^{2 \alpha})) C_{\mathsf{EM},2}(T) \\ & \qquad \qquad  + 32 \mathsf{K}^2_{\text{Total}}  (1+T^{2 \alpha})(1+  2  \widetilde{\varepsilon}_{\text{AL}} + 2 | \theta^{*}|^2) ]
				\\ &  \qquad    + 2  [    (1+ 8 \mathsf{K}_3^2(1+ 4T^{2 \alpha}))^{1/2} C^{1/2}_{\mathsf{EMose},2} +   2 \mathsf{K}_1       (1+ 8  \widetilde{\varepsilon}_{\text{AL}} + 8 | \theta^{*}|^2)^{1/2}]
				\\ &  \qquad \qquad \times  [  d \sqrt{2} (  1+ 8  \mathsf{K}_3^2 (1+ 4  T^{2\alpha} )            )^{1/2}] \Bigg)^{-1/(2\alpha)} ,  1\Bigg\}
			\end{aligned}
			$
			
			\\
			\bottomrule
		\end{tabular}
		\label{tab:convconst_general}
		
		
	\end{table}	
\newpage
Table \ref{Table_notation}  provides the main notation used throughout this work and indicates where each symbol is introduced.
\begin{table}[h]
\centering
	\caption{List of the main notation.} 
	\begin{tabular}{ |p{3cm} | p{6cm}| } 
\hline
\begin{sc}
Symbol
\end{sc} &\
\begin{sc} Reference in the text \end{sc}  \\
\hline
 $\pi_{\mathsf{D}}$  & Equation \ref{OU_process_introduction} \\
$\hat{\theta}$ & Assumption~\ref{general_assumption_algorithm} \\
$\widetilde{\varepsilon}_{\text{AL}}$ & Assumption~\ref{general_assumption_algorithm}
\\
$\partial U$ & Definition \ref{definition_subdifferential}
\\ 
$K$ & Assumption~\ref{assumption_score_strong_monotonicity_new_semi_convexity}-(ii) \\
$\mu$ & Assumption~\ref{assumption_score_strong_monotonicity_new_semi_convexity}-(iii)
\\ $\alpha$ & Assumption \ref{Assumption_2_without_derivative}
\\ 
$\varepsilon_{\text{SN}}$  & Assumption \ref{assumption_equivalence_global_minimiser_epsilon}
\\
$\kappa_U$  & Definition~\ref{definition_weak_convexity_profile_new} \\
$f_L$ & Equation~\ref{definition_f_L_assumption_new} \\
$\beta^{\text{OS}, K, \mu}_t $ & Equation \ref{weak_convexity_contraction_constant_K_mu}
\\
$B(t,0,  \mu, K)$ & Equation \ref{expression_constant_A_t} \\
		\hline
	\end{tabular}
	\label{Table_notation}
\end{table}
\end{document}